\tikzset{
    -Latex,auto,node distance =1 cm and 1 cm,semithick,
    state/.style ={ellipse, draw, minimum width = 0.7 cm},
    point/.style = {circle, draw, inner sep=0.04cm,fill,node contents={}},
    bidirected/.style={Latex-Latex,dashed},
    el/.style = {inner sep=2pt, align=left, sloped}
}
\newcommand{\argmax}{\text{argmax}}
\newcommand{\squigglypath}[1]{\hspace{1mm} \raisebox{0.45em}{\uwave{\hspace{0.5cm}}} \hspace{1mm}}
\newcommand{\BlackBox}{\rule{1.5ex}{1.5ex}}  
\newtheorem{example}{Example} 
\newtheorem{theorem}{Theorem}
\newtheorem{lemma}[theorem]{Lemma} 
\newtheorem{proposition}[theorem]{Proposition}
\newcommand{\Ecal}{\mathbb{E}}
\newcommand{\Xcal}{\mathcal{X}}
\newcommand{\Rcal}{\mathbb{R}}
\newcommand{\Dcal}{\mathcal{D}}
\newcommand{\Gcal}{\mathcal{G}}
\newcommand{\Ical}{\mathcal{I}}
\newcommand{\sign}{\text{sgn}}
\newcommand{\pa}{\text{pa}}
\newcommand{\anc}{\text{anc}}
\newcommand{\var}{\mathop{\rm var}}
\newcommand{\ind}{\mathds{1}}
\newcommand{\indep}{\mathrel{\perp\mspace{-10mu}\perp}}
\newcommand{\notindep}{\centernot{\indep}}
\newcolumntype{L}[1]{>{\hsize=#1\hsize\raggedright\arraybackslash}X}%
\newenvironment{proofsketch}{\par\noindent{\bf Proof Sketch\ }}{\hfill\BlackBox\\[2mm]}
\newcommand{\edit}[1]{{#1}}
\title{Discovering Optimal Scoring Mechanisms\\
in Causal Strategic Prediction}
\author{Tom Yan, Shantanu Gupta, Zachary C. Lipton \\
       Carnegie Mellon University\\
       \{\href{mailto:tyyan@cmu.edu}{tyyan}, \href{mailto:shantang@cmu.edu}{shantang}, \href{mailto:zlipton@cmu.edu}{zlipton}\}@cmu.edu}
\begin{document}

\maketitle

\begin{abstract}
Faced with data-driven policies, individuals will manipulate their features 
to obtain favorable decisions. 
While earlier works cast these manipulations as undesirable gaming, 
recent works have adopted a more nuanced causal framing in which 
manipulations can improve outcomes of interest, 
and setting coherent mechanisms requires accounting 
for both predictive accuracy and \edit{improvement of the outcome.}
Typically, these works focus on 
\emph{known} causal graphs, consisting only of an outcome and its parents. 
In this paper, we introduce a general framework 
in which an outcome and $n$ observed features 
are related by an arbitrary \emph{unknown} graph 
and manipulations are restricted 
by a fixed budget and cost structure.
We develop algorithms that leverage strategic responses 
to discover the causal graph in a finite number of steps.
Given this graph structure, we can then derive mechanisms that trade off between accuracy and improvement.
Altogether, our work deepens links between causal discovery and incentive design and provides a more nuanced view of learning under causal strategic prediction.
\end{abstract}

\section{Introduction}
In consequential settings, 
machine learning models do more than predict. They also drive decisions that impact people's lives.
For example, credit scores may simultaneously 
serve as predictions of the likelihood of repayment
and as the basis on which loans are approved. 
When decisions impact individuals
whose features are manipulable,  
these individuals will be \emph{incentivized} 
to intervene on their features
in order to raise the model scores. 
Whether or not these increases in score 
(e.g., predicted likelihood of repayment)
result in improvements in the outcome of interest
(e.g., actual likelihood of repayment)
depends on the \emph{causal} relationships between 
the features and the outcome.
Thus, this causal knowledge is crucial to designing scoring \emph{mechanisms} that serve as both accurate predictors and beneficial incentives.

A blossoming line of research on strategic machine learning studies these incentive effects~\citep{bruckner2011stackelberg, hardt2016strategic, dong2018strategic, kleinberg2020classifiers, bechavod2020causal, levanon2022generalized, zhang2021incentive, sundaram2021pac, ahmadi2021strategic, yan2022margin, ghalme2021strategic, chen2020learning, brown2022performative, perdomo2020performative, mendleranticipating}.
\citet{hardt2016strategic}
conceive of feature manipulations as gaming,
putting aside the possibility that manipulations
might change the outcome of interest.
More recently, researchers have recognized
that manipulations can causally influence
the outcome interest, and seek to learn optimal scoring mechanisms for outcome improvement 
\citep{kleinberg2020classifiers, shavit2020causal}. 
However, most works thus far assume that the underlying graph structure 
is known and consists only of the outcome node and its parents.
A notable exception is \citet{miller2020strategic} who demonstrate 
that producing an optimal scoring mechanism is at least as hard
as identifying the underlying causal graph. 
However, they do not explore how the ability to deploy mechanisms
and observe the induced strategic responses can be leveraged
to efficiently identify the underlying causal structure, 
and in turn, derive the optimal scoring mechanism.

\subsection{Paper Contributions}

We introduce the general framework of \emph{Causal Strategic Prediction} (CSP),
where variables are related by a Structural Causal Model (SCM)~\citep{pearl2009causality} associated with an arbitrary causal graph and a firm interacts 
with a population of individuals over a sequence of turns.
Notably, our framework can model complex relationships among the $n$ features,
and between the features and the outcome,
capturing more of the dynamics that may be at play in real-world scenarios.
In our setting, the firm initially has no knowledge of the underlying graph.
Then, over a sequence of turns, the firm iteratively chooses scoring mechanisms 
and then observes data resulting from the distribution
induced as individuals play their best response.
Our main contribution is a set of algorithms
that efficiently (in a number of turns proportional to the number of variables)
discover the underlying graphs and, in turn, identify the optimal mechanism.
Finally, we derive insights on tradeoffs between risk and improvement 
that arise when general graphs are used 
to model relationships between features and the outcome.
These insights motivate the necessity of considering general graphs \edit{with arbitrary graph structure}, which are captured in our framework.

In our setup, we model feature manipulations as soft interventions
on the underlying causal graph (Section~\ref{sec:csp_framework}).
Subject to some cost structure, individuals 
apply additive perturbations to variables,
which influence both the value of the intervened-upon variable
and all downstream variables in the graph 
(possibly, but not necessarily, including the outcome of interest). 
Capturing the causal effect of feature manipulation 
lies at the heart of strategic ML and recourse literature. 
Our framework allows us to quantify the causal effect of such changes, 
distinguishing the good (improvement) from the bad (gaming). 
Using this framework, we then derive tradeoffs 
between predictive performance and improvement, 
when general graphs are used to model the causal effects of feature manipulation (Section~\ref{sec:tradeoff}). 
Our analysis shows that such a tradeoff does not exist
in stylized graphs studied in most prior works,
but does exist in general graphs.
We uncover the source of this tradeoff, 
which reveals a notable insight challenging convention: 
anti-causal features (proxies) may not only be accurate predictors, 
\emph{but also} beneficial incentives.
    
We develop the first set of discovery algorithms 
that can identify \emph{arbitrary} graphs 
using the best responses from individuals 
(Sections~\ref{sec:linear_time_alg} and~\ref{sec:quad_time_alg}).
The crux of these algorithms is to use aptly chosen mechanisms 
to induce strategic responses, which can be used for causal discovery.
For a class of heterogeneous cost functions 
that generalizes separable quadratic costs~\citep{shavit2020causal},
we use a per-node incentivization strategy
to discover the graph in  $n$ rounds, 
where $n$ denotes the number of nodes in the graph 
(Algorithm~\ref{alg:quadratic_alg}).
In the linear cost setting, where identifying the graph may not be possible,
we develop an algorithm to nonetheless 
recover the Pareto-optimal scoring mechanisms (Algorithm~\ref{alg:linear_alg}).
Finally, for a broader class of cost functions 
that generalizes both quadratic~\citep{shavit2020causal} 
and linear costs~\citep{bechavod2020causal, kleinberg2020classifiers},
we develop an algorithm
that can discover additive graphs in at most $n(n-1)/2$ rounds
(Algorithm~\ref{alg:general_alg}).

Our paper concludes with conceptual insights that emerge from our investigation. 
We briefly touch upon the two main ones here. 
One insight is the possibility of using incentives to probe and understand causal structure. Our work introduces a new mechanism that enables induced distribution shift, typically viewed as a challenge to be overcome in domain adaptation literature~\citep{lipton2018detecting,zhang2013domain,magliacane2018domain}, could be useful for causal discovery. Our work could also prove useful to social scientists for whom incentives may merely be a means towards the end of discovering causal structure. Another insight is that proxies can boost both predictive accuracy and beneficial incentivization. This qualitative insight adds novel nuance to the conversation on when it is useful to \edit{include} anti-causal features in ML models.

\section{Related Works}
\subsection{Causal View of Strategic Manipulation}

To our knowledge, \citet{kleinberg2020classifiers} are 
the first in the strategic ML literature to raise awareness 
that scoring mechanisms, when made transparent 
to strategic individuals,
can be leveraged to induce best responses that improve
an outcome of interest.
In subsequent work, \citet{shavit2020causal} study the optimization of improvement and accuracy under quadratic cost and a linear SCM with known graph structure, consisting of the outcome node $Y$ and its parents. Under the same linear SCM and quadratic cost setup, \citet{harris2022strategic} study recovering the SCM parameters by viewing the deployed model as an instrument and recover the SCM using two-stage least squares. Finally, with a linear SCM but a linear cost function, \citet{bechavod2020causal} study online learning of the SCM parameters
via strategic responses. By contrast, our work provides complementary algorithms that discover the initially unknown graph.

Addressing general graphs, \citet{miller2020strategic} 
connect strategic ML to causality by proving that incentive design can be 
at least as hard as causal discovery. 
Their hardness result states that in a static setting, access to an oracle that can set the cost function and return a model inducing improvement can be used to do causal discovery. By contrast, we consider a sequential setting, in which one does not have the oracle but is allowed to iteratively set mechanisms and observe the resultant distributions. We develop algorithms that adaptively set mechanisms to generate data that enables causal discovery. This sequential setting is also studied by~\cite{perdomo2020performative}, which is focused only on predictive accuracy and not both improvement and accuracy under a causal lens.

In summary, our results relax assumptions made in prior works on the graph structure (allowing for arbitrary graphs), the form of the SCM (allowing for non-linear structural equations), and cost functions (going beyond linear and quadratic costs). 
In Appendix~\ref{sec:game_vs_improve}, 
we demonstrate how our framework 
subsumes several canonical strategic ML settings. 
Importantly, unlike prior works, our results demonstrate
that indirect intervention via incentives schemes
can be leveraged both to reveal causal structure
where it might be otherwise unidentified 
and to derive improved mechanisms.

\subsection{Causal Discovery} 

Many works in the causal discovery literature 
focus on using observational data to identify the graph
up to Markov equivalence~\citep{spirtes2000causation, chickering2002optimal, kalisch2007estimating}.
Another line of research focuses 
on identifying the exact graph using observational data
under stronger assumptions on the functional forms 
of the structural equations 
and noise distributions~\citep{hoyer2008nonlinear, zhang2012identifiability, shimizu2014lingam, peters2014causal, uemura2022multivariate}.
By contrast, our work shows that by leveraging induced shifts from strategic best responses, 
we can identify the exact causal graph under comparatively weaker assumptions. 

Several prior works have also looked to 
characterize a narrower equivalence class
by leveraging interventional data 
\citep{eberhardt2007interventions, he2008active, hauser2012characterization, yang2018characterizing}.
Other works have 
characterized the interventional equivalence class
when the targets of the interventions are unknown 
\citep{jaber2020causal, squires2020permutation}.
By contrast, in our setting,
we discover the underlying graph in cases 
where soft interventions are carried out
at unknown targets by individuals
in response to the deployed scoring mechanisms.
%


\section{Causal Strategic Prediction}
\label{sec:csp_framework}
\begin{figure}[t]
    \centering
   \includegraphics[width=0.6\textwidth]{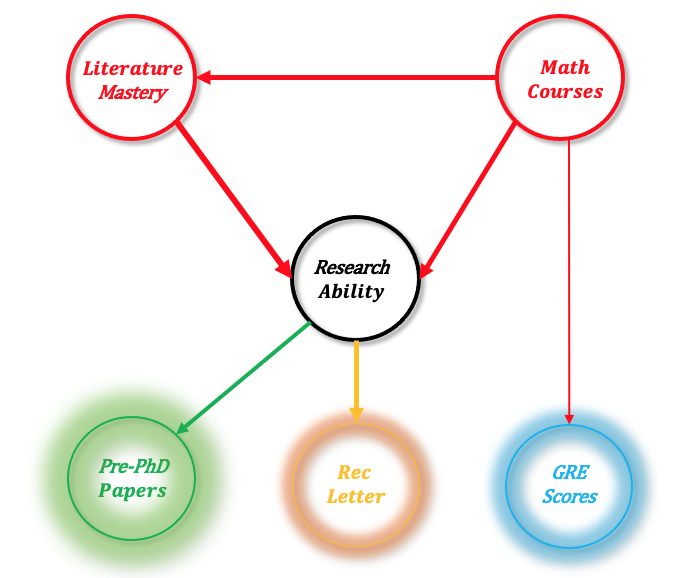}
    \caption{\edit{Above is a toy-example graph depicting causal relationships between five observable features considered in PhD admissions (``Literature Mastery'' may be observed in the applicant's statement of purpose); the edge thickness indicates the size of the causal effect, and the glow around the node represents the size of the exogenous noise. Through this graph, we highlight two key aspects of our setup (1) We consider general graphs with arbitrary graph structure, and importantly, proxies of the variable of interest $Y$ (``Research Ability''). Consideration of proxies have been largely absent in prior works, and yet two of the most important factors considered in PhD admissions, ``Recommendation Letters'' and ``Pre-PhD Papers'', are proxies of $Y$, making it crucial to consider such nodes in the modeling and thus the causal graph. (2) We model the causal-relationships between features. In admissions, schools often recommend extensive coursework in math, due to its numerous positive downstream effects. These causal effects are reflected in the red edges of the graph: indeed, studying math leads to not only downstream improvement in technical problem solving (``Research Ability''), but also in mathematical maturity for understanding prior works (``Literature Mastery'') as well as in reasoning ability for GRE test-taking (``GRE Scores'').}}
    \label{fig:admissions_graph}
\end{figure}

To motivate our setup, consider the following example: 
an university admissions committee seeks to design a mechanism $\hat{Y}$ (admission criteria) 
for admitting students based their measured attributes $X$ (as seen in their application profiles). 
The committee aims to admit the students with the best research ability $Y$.
In our setup, we consider the research ability $Y$ to be latent (but realized), and only observable ex-post; for example, admitted students' research abilities are later directly
observed by their advisors and via the research they produce. Note that if $Y$ is observable, which is typically not the case in admissions, hiring and various other selection processes, one may simply select using $Y$ without needing a predictive model of $Y$.

One natural goal for this mechanism is that it should accurately predict $Y$.
At the same time, the committee might wish to set a coherent set of incentives,
such that students optimizing their chances of acceptance 
would, in so doing, improve their actual research abilities. To some extent, these goals can be at odds:
the prediction-optimal mechanism may not be improvement-optimal,
and vice versa.
Moreover, whatever the committee's aims, determining the optimal mechanism
requires accounting for the impacts of incentives. 
While the committee might hope that the optimal mechanism
may be identified from the observational data, recently, \citet{miller2020strategic} demonstrated 
that even determining the improvement-optimal mechanism
is tantamount to discovering the exact causal graph,
a task that is not, in general, possible from observational data alone.


Now suppose that, accepting its ignorance,
the committee nevertheless adopts some heuristic 
for choosing a new mechanism, parameterized by $\theta_1$,
and thus induces a new distribution $\Dcal(\theta_1)$.
In the next period, the committee again chooses a new mechanism $\theta_2$,
inducing a subsequent distribution $\Dcal(\theta_2)$.
In each time period, the new mechanism induces 
a change in the distribution, 
revealing key details about 
the structure of the problem to the committee. We formalize and study such an interactive setup,
demonstrating how the committee may strategically choose a sequence of mechanisms,
such that the resulting sequence of induced distributions
suffice to exactly identify the causal graph,
and, in turn, reveal the set of Pareto-optimal mechanisms.
Notably, our methods can identify the exact causal graph
even in cases where the graph is not identifiable 
from observational data alone.

\paragraph{Key Features of the Setup} Before moving onto formal definitions,
we emphasize two features, \edit{illustrated in Figure~\ref{fig:admissions_graph}}, that distinguish 
our setup from many of those in prior works.

\begin{itemize}
    
    \item We use general graphs to model the relationship between features and, in particular, consider downstream proxies of the latent variable $Y$. Indeed, many criteria considered in admissions, hiring, and other selection processes like sports drafts are downstream of $Y$. For example, in admissions, recommendation letters are a proxy for, and not cause of, research ability; in hiring, interview performances are a proxy for, and not a cause of, on-the-job performance. 
    
    Thus, we view our consideration of general graphs not as generality for generality's sake, but as a crucial aspect that \emph{need} to be included in the modeling. \edit{Indeed, this aspect has been largely overlook in prior works, which focus exclusively on ancestors of $Y$ (the two red nodes in Figure~\ref{fig:admissions_graph}).} As we will see, the consideration of general graphs will require a more nuanced analysis of the tradeoff that arises between accuracy and improvement.

    
    \item Individuals take into account the full causal graph capturing inter-feature relationships when responding strategically; in particular, the best response accounts for all of the downstream effects of interventions. For instance, spending time on math courses may improve logical reasoning (and downstream from this, research ability) as well as GRE math scores. On the hand, expending time practicing GRE math sections may constitute only an improvement on the GRE score itself.
    
\end{itemize}

\paragraph{Setup} At each time step $t$, an institution 
releases a scoring mechanism $f_t = f(\cdot; \theta_t)$, $f_t: \Rcal^n \rightarrow \Rcal$. $f_t$ is used to both predict a real-valued, outcome interest of $Y \in \Rcal$ 
and to influence the individuals subject to $f_t$.
An individual with feature $X = x \in \Rcal^n$ will then best respond to $f_t$,
leveraging their knowledge 
of the causal graph
to choose a manipulated feature $\tilde{X} = \tilde{x}$ that maximizes 
the score under $f_t$,
subject to a cost structure. 
This feature manipulation will also alter the outcome of interest from $Y = y$ to $\tilde{Y} = \tilde{y}$.
At the end of the round, the institution observes only the induced distribution, $(\tilde{X}, \tilde{Y}) \sim \Dcal(\theta_t)$.
Absent manipulation, the data follows 
the \emph{natural distribution} 
$\Dcal_0$, i.e. $(X, Y) \sim \Dcal_0$.

\paragraph{Objectives} We consider two desiderata for $f(\cdot; \theta)$:

\begin{enumerate}
    \item The risk of $f$ on the induced distribution, $\Ecal_{(\tilde{X}, \tilde{Y}) \sim \Dcal(\theta)}[(f(\tilde{X}; \theta) - \tilde{Y})^2]$, which we seek to minimize.

    \item The improvement (i.e. causal effected on $Y$) induced by $f$, $\Ecal_{(\tilde{X}, \tilde{Y}) \sim \Dcal(\theta)}[\tilde{Y}] - \Ecal_{(X, Y) \sim \Dcal_0}[Y]$, which we seek to maximize. Note that this is equivalent to maximizing $\Ecal_{(\tilde{X}, \tilde{Y}) \sim \Dcal(\theta)}[\tilde{Y}]$.

\end{enumerate}

\subsection{Modeling the Causal Effects of Best Response}

To model how feature manipulations affect $Y$, 
we assume there is an unknown, underlying SCM 
(associated with a directed acyclic graph (DAG)) 
that captures the relationships between the variables.
The associated DAG has a directed edge $X_i \rightarrow X_j$,
if $X_i$ is a direct cause of $X_j$. 
The nodes in the graph consist of endogenous nodes $(X_1, ..., X_n, Y)$ along with corresponding exogenous nodes $(U_1, ..., U_n, U_y)$. Let $\pa(i)$ denote the indices of parent nodes of $i$. Each node $X_i$ is related to its parents $X_{\pa(i)}$ (used to denote $\{X_j: j \in \pa(i)\}$ for brevity)
by a structural equation with an arbitrary function $g_i \in C^1$:
\begin{equation}
\quad X_i = g_i(X_{\pa(i)}, U_i), \forall i \in [n].
\end{equation}
Like~\citet{miller2020strategic}, we make several assumptions on the causal graph to facilitate our analysis.
First, we assume causal sufficiency (no unobserved common causes of 
the endogenous nodes), and knowledge of the skeleton of the causal graph
(the set of undirected edges of the graph).
This is a mild assumption as, 
under certain faithfulness assumptions,
existing structure learning algorithms 
can be used to obtain this skeleton~\citep{scanagatta2019survey}.
Also, unless otherwise stated, 
we assume that the SCM is an Additive Noise Model~\citep{peters2017elements} (ANM). 
ANMs are broad class of models,
in which $g_i(X_{\pa(i)}, U_i) = g'_i(X_{\pa(i)}) + U_i$ for some function $g'_i$. 
Note in particular that ANMs generalize linear SCMs, 
which are commonly studied in prior strategic ML works~\citep{kleinberg2020classifiers,shavit2020causal,bechavod2020causal,harris2022strategic}.

\paragraph{Individuals are aware of the causal graph} To our knowledge, prior strategic ML works assume individuals best respond without accounting for causal structure, i.e., optimizing $f(x + a; \theta)$. That is, it is implicitly assumed that the underlying causal graph is such that there are no causal relationship between features. This assumption is less realistic when there is an underlying 
causal graph linking the features. Hence, similar to \citet{karimi2020algorithmic}, 
we model individuals as being causally aware and optimizing $f(\cdot; \theta)$ subject to the causal structure. 

This assumption of causal knowledge is in line with the standard assumption of information asymmetry in principal-agent models: the individuals (agents) know more than the institution (principal)~\citep{holmstrom1979moral, grossman1992analysis}. 
In prior strategic machine learning settings, agents know the true features (prior to manipulation), which are unknown to the principal. In our setting, we additionally assume that agents have causal knowledge, which is initially unknown to the firm.

\paragraph{Modeling feature manipulation}
We choose to model feature manipulations as soft interventions, which differs from prior works that model manipulations as being hard interventions~\citep{karimi2020algorithmic, karimi2021algorithmic}. Soft interventions are more suitable as a model of feature manipulation in the strategic ML context; it has the salient advantage of capturing downstream effects of interventions. For example, consider the graph $X_1 \rightarrow X_2$ and both $X_1, X_2$ are intervened upon. Under hard interventions, $X_2$ will not be affected by the change in $X_1$, while such a change would be  captured by soft interventions.

\subsection{Individual's Best Response}

We can now put everything together.
Faced with a scoring mechanism $f$,
an individual with realized exogenous variable $u$ 
and feature $x$ best responds
by solving for the optimal interventional values $a^* \in \Rcal^n$ as follows:

\begin{equation}
\label{eqn:best_response}
\begin{aligned}
a^* = \arg\max_{a} \quad & f(x'; \theta)\\
\textrm{s.t.} \quad & x'_j = g_j(x'_{\pa(j)}, u_j) + a_j, \quad \forall j \in [n],   \\
\quad & c(a_1, ..., a_n; x) \leq b,
\end{aligned}
\end{equation}
where $c$ is the individual's cost function 
(belonging to $C^1$) and $b$ is the budget. 
Under the optimal intervention $a^*$, the individual's $j$th feature changes from $x_j$ to $\tilde{x}_j = g_j(\tilde{x}_{\pa(j)}, u_j) + a^*_j$, defined recursively. 
In line with the standard strategic ML formulation, we assume 
that the individuals cannot directly intervene on $Y$, which can only be manipulated through $X$ \citep{miller2020strategic, shavit2020causal, kleinberg2020classifiers}. For instance, loan repayment likelihood may only be influenced through changing causal factors such as getting a higher-paying job.

\paragraph{Linear Graphs:} \edit{For a concrete example of the best response,} suppose that the SCM is linear and that
we may relate features $x$ to $u$ through an auto-regression matrix $B$ with an all-one diagonal: $x = Bu$ with $x_j = B_j^Tx_{\pa(j)} + u_j$ defined recursively coordinate-wise. Under manipulation, $x'_j = B_j^Tx'_{\pa(j)} + u_j + a_j = B_j^Tx'_{\pa(j)} + (u_j + a_j)$, or in vector form, $x' = B(u + a) = x + Ba$. Thus, if the mechanism is linear, $f(x') = w^Tx'$, we may explicitly write down the best response optimization program as: $\max_{a} w^T(x + Ba)$ s.t. $c(a_1, ..., a_n; x) \leq b$.

\edit{To further consolidate intuition, we will consider the following three-node linear, chain graph, which will help to reveal some of the intuition underlying the improvement induced by proxies. }

\begin{example}\label{example: two_node}
Consider the graph of $X_1 \rightarrow Y \rightarrow X_2$ 
with linear SCM: $Y = X_1 + U_Y$, $X_2 = \alpha_2 Y + U_2$ with $\alpha_2 > 0$, and all exogenous variables $U_{\cdot}$ have mean $0$.
\end{example}

\edit{Let the cost be $c(a) = a_1^2 + a_2^2$. Suppose $X_2$ is very predictive of $Y$. We wish to deploy $f = X_2$ and would like to know: what happens to the improvement for this $f$? A quick calculation yields that in the best response, the ratio between the intervention on $X_1$ and the intervention on $X_2$ is equal to $\alpha_2$. So if $\alpha_2$ is very small, then as we might expect for a proxy, the improvement is near zero since almost all of the budget goes into $a_2$ and intervening on $X_2$, which has no effect on changing $Y$. However, if $\alpha_2$ is very large, then $f = X_2$ will in fact be near-optimal improvement-wise despite $X_2$ being a proxy. This is because in the best response, almost all of the budget goes into $a_1$ and intervening on $X_1$, which in turn increases $Y$.}

\paragraph{Heterogeneous Cost:} In this framework, we allow the cost functions to be heterogeneous and dependent on the values of the features.
\edit{For instance, the cost of gaining greater expertise at different subjects may vary depending on a student's features, e.g., what they might be adept at doing.}
The only assumption we will make is that $c$ is bounded and satisfies a standard regularity condition in constrained optimization: $\nabla c \neq 0$ for $a$ on the surface $c(a_1, ..., a_n; x) = b$.

\section{General Graphs} \label{sec:tradeoff}
To begin our investigation, a key first question to address is whether there is even need to study strategic ML in the context of general graphs.
Is causal strategic prediction in general graphs no different from in simpler graphs that comprise of only $Y$'s ancestors and $Y$? If so, existing methods from strategic ML may already suffice.


\subsection{Tradeoffs under a General Graph Structure}

In this subsection, we establish that causal strategic prediction in general graphs introduces important considerations that do not arise in the simpler settings
where all features are ancestors of the outcome variable $Y$. 
In particular, when all features are ancestors of $Y$, 
no tradeoff arises between improvement and accuracy: 
there exists a model that is simultaneously improvement optimal and risk optimal.

By contrast, the two can be at odds in general graphs. 
The existence of a tradeoff thus requires us to reconcile 
the two objectives by solving for the Pareto front. As we show below, the source of this tradeoff is \emph{the descendants of $Y$}.
Our analysis establishes that incentivization is one setting, like prediction,
where \emph{anti-causal} features matter and should be considered.

\paragraph{All-ancestors graphs} 
In this setting, we observe that there is a mechanism, as a function of only the parents of $Y$, that is both improvement and risk optimal. We formalize this in the following proposition, whose proof may be found in Section~\ref{sec:tradeoff_proofs} of the Appendix.

\begin{proposition}
Model $f = g_Y(X_{\pa(Y)})$ maximizes improvement and minimizes risk.
\end{proposition}
This means that only local causal discovery (of the parents of $Y$) is needed for constructing an optimal model. Indeed, once $\pa(Y)$ is identified, one may learn $g_Y$ 
by deploying any $f(\cdot; \theta)$ and computing $\Ecal_{\Dcal(\theta)}[Y | X_{\pa(Y)}]$.
Since $Y$ cannot be intervened upon,
$a^*_Y = 0$ and $\tilde{y} = g_Y(\tilde{x}_{\pa(Y)}) +u_Y + a^*_Y =  g_Y(\tilde{x}_{\pa(Y)}) + u_Y$. Hence, $\Ecal_{\Dcal(\theta)}[Y | X_{\pa(Y)}] = g_Y(X_{\pa(Y)}) + \Ecal[U_Y]$ and we may obtain $g_Y$ by ignoring the offset. Interestingly, this establishes a link between robustness and improvement, 
and uncovers the potential applicability of existing methods that try to learn $g_Y$ across distributions with unknown interventions~\citep{peters2016causal,arjovsky2019invariant}. Finally, we note that under general graphs, $f = g_Y(X_{\pa(Y)})$ is in fact not improvement-optimal; an illustration of this may be found in Section~\ref{sec:tradeoff_proofs} of the Appendix.

\paragraph{How descendants of $Y$ results in tradeoffs} 
In general graphs where 
model features may be descendants of $Y$, there is a tradeoff.
The extent of this tradeoff is determined by the size of \emph{the variance of $U_Y$}.
On the one hand, descendants of $Y$ may be included in the model to predict $U_Y$ and attain a 
MSE lower than $\var(U_Y)$.
On the other hand, this inclusion of $Y$'s descendants 
may induce individuals to expend 
budget on descendants of $Y$, 
which does not lead to improvement. 

There is little tradeoff to be had when $\var(U_Y)$ is small,
since $f = g_Y(X_{\pa(Y)})$ can induce maximal improvement 
while attaining a small MSE of $\var(U_Y)$.
Thus, under causal sufficiency, a high $\var(u_Y)$ is a necessary condition for a large tradeoff to exist. However, as we will see in the example that follows, this condition is not a sufficient one. 

How much accuracy must be sacrificed for improvement optimality (or vice versa)
depends on (1) the SCM functions $g_i$,
which capture the strength of incentives;
(2) the variances of exogenous variables, 
which capture the predictiveness of features 
and how much they should be weighed 
in models with near-optimal accuracy. 
\edit{We will illustrate this tradeoff through Example~\ref{example: two_node}. In this setting, we have the following proposition, whose proof may be found in Section~\ref{sec:tradeoff_proofs} of the Appendix.}

\begin{proposition}
In Example~\ref{example: two_node}, there exists a SCM and cost structure, where the optimal improvement is $1$ and:

\begin{enumerate}
    \item Any mechanism with low risk $o(\epsilon)$ must have improvement at most $O(\epsilon)$
    
    \item There is a mechanism $f$, which is a function of only the proxy, that has low risk $o(\epsilon)$ and also a high improvement of at least $1 - \epsilon$.
\end{enumerate}

\end{proposition}


The two settings above show how the tradeoffs may be large or small depending the SCM parameters. 
This illustrates a main takeaway that proxies \emph{may be useful} not only for predicting $Y$, but also for incentivizing improvement. The usefulness of a proxy for purposes of prediction
hinges upon the noise level associated with the proxy. \edit{For example, in Figure~\ref{fig:admissions_graph}, `Recommendation Letters'' is depicted as being a more predictive feature than ``Pre-PhD Papers'', due to lower exogenous noise.} Its usefulness towards improvement hinges upon the cost structure:
anti-causal incentives are beneficial when the most efficient 
pathways to exploit for manipulating the variable
involve intervening upstream of the outcome of interest. \edit{In Figure~\ref{fig:admissions_graph}, `Recommendation Letters'' and ``Pre-PhD Papers'' may both be good incentives if it is costly to change these features without improving ``Research Ability''.}

These observations help to clarify why proxies are often used 
in real-world evaluation schemes. 
For example, recommendation letters are arguably downstream 
of demonstrated research ability.
However, recommendation letters are difficult 
for applicants to intervene upon directly. 
We might hope that the most efficient route to improving 
one's recommendation letters would be to intervene upstream, and
improve one's skills and performance in research.


\subsection{Computing the Pareto Front}

\begin{figure}[t]
    \centering
   \includegraphics[width=0.75\textwidth]{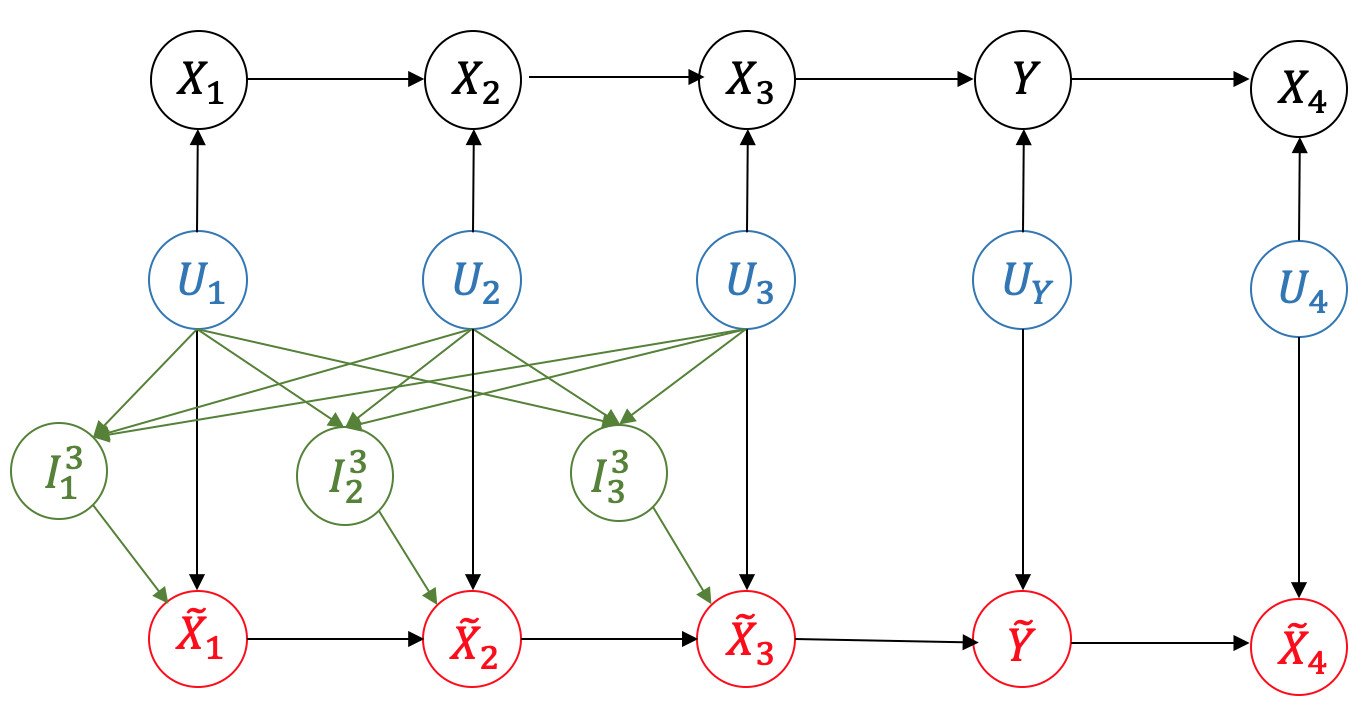}
    \caption{The above chain graph example illustrates confounding due to incentivized interventions. In black, we have four nodes in the unobserved graph $G$, representing features in $\Dcal_0$, with unobserved exogenous variables $U_i$ (blue). 
    In red, we have nodes in the observed graph $\tilde{G}$, 
    representing features in $\Dcal_3$. $\tilde{G}$'s structure and SCM mirrors that of $G$. The only difference is that $\tilde{G}$ is confounded by unknown intervention nodes $\{I_{j}^3\}_{j=1}^3$ (green), which open up a path between $\tilde{X}_1$ and $\tilde{X}_3$ that does not exist between $X_1$ and $X_3$ in $G$.}
    \label{fig:confounding_fig}
\end{figure}

Having established that there is a tradeoff between improvement and risk, it suffices to solve for the Pareto front. Our approach to solving for the Pareto front is as follows:

\begin{enumerate}
    \item Discover the causal graph.
    
    \item Use the graph structure to identify the SCM.
    
    \item With knowledge of the SCM, compute the Pareto front offline, \emph{without any further deployments,} by optimizing the following objective for any $\lambda > 0$:
    
    \begin{equation}
    \label{eqn:tradeoff}
    \begin{aligned}
    \min_{\theta} \Ecal_{(\tilde{X}, \tilde{Y}) \sim \Dcal(\theta)}[(f(\tilde{X}; \theta) - \tilde{Y})^2]  - \lambda \Ecal_{(\tilde{X}, \tilde{Y}) \sim \Dcal(\theta)}[g_Y(\tilde{X}_{\pa(Y)})].
    \end{aligned}
    \end{equation}
\end{enumerate}

Of the three steps, we note that the graph discovery is main challenge. The SCM may be efficiently identified with knowledge of the graph structure. For more details, please refer to Subsection~\ref{subsec:scm_identifiability} in the Appendix.

\paragraph{Causal Discovery} The key technical challenge we aim to solve in this paper is:

\begin{quote}
    How can we discover the graph using strategic best responses?
\end{quote}

We face three main challenges when discovering the graph. (1) Usually in causal discovery, one is able to designate the node(s) to intervene upon to obtain interventional data for discovery. In our case, we have to target nodes indirectly. (2) As we do not know the causal graph, we cannot observe which variables have been intervened upon as part of the best response optimization in Eq.~\ref{eqn:best_response}. (3) The resultant interventions confound the relationship between $\tilde{X}$'s: the conditional independences in $\Dcal_0$ may no longer hold in $\Dcal_i$ (illustrated in Figure~\ref{fig:confounding_fig}).

    
    


\paragraph{Performance Measure} As in~\citet{shavit2020causal}, our measure of algorithmic performance is the number of deployments needed before we have discovered the true graph. We note that another natural measure of algorithm performance may be the cumulative regret incurred during the discovery process, which we discuss further in  Appendix~\ref{sec:appendix_regret}.

\begin{table}[t!]
    \centering
    \begin{tabular}{lllll}
        \toprule
        \multirow{3}{*}{} & Algorithm~\ref{alg:quadratic_alg} & Algorithm~\ref{alg:linear_alg} & Algorithm~\ref{alg:general_alg}\\
        \midrule
        Cost Function &  Cost Function Class~\ref{cost_class:quadratic} & Linear Cost & Cost Function Class~\ref{cost_class:general} \\
        Applicable SCM   & General SCM   & Linear SCM & Additive SCM \\
        Number of Deployments   & $n$   & $\leq n$ & $\leq n(n-1) / 2$ \\
        \bottomrule
    \end{tabular}
\caption{Summary of the properties of the three algorithms. Note that Cost Function Class~\ref{cost_class:quadratic} generalizes quadratic costs. It is a subset of Cost Function Class~\ref{cost_class:general}, which generalizes both quadratic and linear costs. Additionally, we wish to highlight that for causal discovery, our algorithms only require that the individuals' cost functions belong to the respective cost function classes. The algorithms do not require knowledge of the cost functions. This knowledge is only needed for computing the Pareto front.}
\label{tab:alg_table}
\end{table}

\subsection{Assumptions}~\label{subsec:discovery_assumptions}

Before describing our discovery algorithms, we provide a succinct summary of the assumptions needed for discovering the causal graph and for computing the Pareto Front.

For the task of causal discovery, we develop three algorithms; please see Table~\ref{tab:alg_table}, which details the algorithms' performances and required assumptions. We wish to highlight that the algorithms only require that the individuals' cost functions belong to the listed cost function classes. Knowledge of the exact cost functions is not required.

For the task of computing the Pareto front, we require two assumptions:

\begin{enumerate}
    \item For identification of the SCM, as in \citet{miller2020strategic}, we require that the SCM lie in a broad class of SCM models: Additive Noise Model~\citep{peters2017elements} (ANM). Just as a recap, ANMs generalize linear SCMs, which have been commonly studied in prior works~\citep{kleinberg2020classifiers,shavit2020causal,bechavod2020causal,harris2022strategic}.

    \item For identifying the Pareto front, we require that the individuals' cost functions are known \textemdash a common assumption made in prior works \citep{shavit2020causal, bechavod2020causal}. Note however that, in certain settings such as the linear SCM and linear costs setting, Algorithm~\ref{alg:linear_alg} would not require knowledge of the individual cost function.

\end{enumerate}

\section{Discovery Algorithms with Linear number of Deployments}
\label{sec:linear_time_alg}

\begin{algorithm}[t]
\caption{Discovery Algorithm using Per-Node Incentivization Strategy}
\label{alg:quadratic_alg}
\begin{algorithmic}[1]
\State Input: Distributions $\Dcal_0, \Dcal_1, ..., \Dcal_n$, Graph Skeleton $\text{GS}$
\State $\Gcal = \text{GS}$ \Comment{partially oriented graph $\Gcal$}
\State $\text{SG} = \{X_i\}_{i \in [n]} \cup \{Y\}$ \Comment{subgraph of unoriented nodes $\text{SG}$}
\State $S = \{\}$ \Comment{set of oriented nodes $S$ (complement of nodes in $\text{SG}$)}
\While{$|\text{SG}| > 1$}
\State Y\_root = True
\For{$X_i \in \text{SG} \setminus \{Y\}$}
    \State SG\_root = True
    \For{node $V \in \text{SG}$ adjacent to $X_i$ in $\text{GS}$} \Comment{for each adj node, test if $X_i$ is root}
        \State $S_{\text{control}} = \anc_{\Gcal}(X_i)$
        \If{$(V | X_i, S_{\text{control}})_{\Dcal_i} \neq (V | X_i, S_{\text{control}})_{\Dcal_0}$}\label{alg_line:natural_root_test} \Comment{$X_i$ cannot be a root in $\text{SG}$}
            \State SG\_root = False
            \State \textbf{break}
        \EndIf
    \EndFor
    \If{SG\_root} \Comment{each adj node is child as $X_i$ is root}
        \For{node $V \in \text{SG}$ adjacent to $X_i$ in $\text{GS}$}\label{alg_line:quad_orientation}
            \State Orient $X_i \rightarrow V$ in $\Gcal$, Remove edge $X_i - V$ from $\text{GS}$
        \EndFor
        \State $\text{SG} \leftarrow \text{SG} \setminus \{X_i\}$, $S \leftarrow S \cup \{X_i\}$ \Comment{update set of (un)oriented nodes $\text{SG}$ and $S$}
        \State Y\_root = False
        \State \textbf{break}
    \EndIf
\EndFor
\If{Y\_root} \Comment{no $X$ root node found in $\text{SG}$, by elimination, $Y$ must be the (only) root}
    \For{$X_j \in \text{SG}$ adjacent to $Y$ in $\text{GS}$} \Comment{each adj node is child as $Y$ is root}
        \State Orient $Y \rightarrow X_j$ in $\Gcal$, Remove edge $Y - X_j$ from $\text{GS}$
    \EndFor
    \State $\text{SG} \leftarrow \text{SG} \setminus \{Y\}$, $S \leftarrow S \cup \{Y\}$ \Comment{update set of (un)oriented nodes $\text{SG}$ and $S$}
\EndIf
\EndWhile
\State \textbf{return} $\Gcal$ \Comment{returns fully oriented graph}
\end{algorithmic}
\end{algorithm}

\subsection{Per-node Incentivization}

To begin, we examine the natural exploration strategy of per-node incentivization: deploying $\{f = X_i\}_{i=1}^n$. While stylized, this simple exploration strategy is useful in that it introduces a minimal amount of confounding. Since only one node is incentivized at a time, there is minimal confounding on the graph that results from induced interventions, and facilitates understanding the causal structure of the true graph through the confounded graph.
    
One may also observe that, besides the choices of $f$ determined by the exploration strategy, the cost function also matters, as it influences where the interventions occur. An arbitrary cost function such as $(b + 1)\mathds{1}\{\prod_{i \in [n]} a_i = 0\} + \mathds{1}\{\prod_{i \in [n]} a_i \neq 0\} c(a; x)$ will force an intervention on every node for any $f$, maximally confounding $\tilde{X}$: if there is one node which is not intervened upon, the cost would be $b + 1$ and would exceed the budget. And so, additional assumptions are needed on the cost functions to allow for efficient discovery.


\paragraph{Separable Heterogeneous Cost Functions}~\label{cost_class:quadratic} We identify a general class of cost functions that allows for efficient causal discovery. It is defined as follows:

$$c(a; x) = \sum_{i=1}^n c_i(a_i; x_{S_i}),$$
where $S_i \subseteq \anc(i)  \cup \{i\}$. Note that the cost is heterogeneous when $S_i \neq \emptyset$. Each cost function $c_i$ is assumed to be strictly increasing in the magnitude of $a_i$ and under zero change to $X_i$, the cost is zero: $c_i(0; x_S) = 0$. The key condition we will require of $c_i$ is that for each $i \in [n]$ and 
for all attainable values of $\{x_j\}_{j \in {\anc(i)  \cup \{i\}}}$, $\frac{\partial c(a; x)}{\partial a_i} \Bigr|_{a_i=0} = c'_i(0; x_{S_i}) = 0$. This key condition will ensure the property that node $i$ will be intervened upon in distribution $\Dcal_i$, which is the distribution induced by $f = X_i$.

An example of a cost function in this class is the (popular) homogenous quadratic cost $c(a) = \frac{1}{2} \|a\|^2$ (e.g., see~\citet{shavit2020causal}). For another example, under the chain graph $X_1 \rightarrow ... \rightarrow X_n$, an example heterogeneous cost function can be $c(a; x) = \sum_{i=1}^n c(a_i; x_1, .., x_i)$. 

\paragraph{Algorithm} With the guarantee that node $i$ is intervened upon in $\Dcal_i$, 
we develop causal discovery Algorithm~\ref{alg:quadratic_alg}, requiring access to the natural distribution $\Dcal_0$ and $\{\Dcal_i\}_{i=1}^n$, where $\Dcal_i$ is the resultant distribution when $f = X_i$ is deployed.
For facility of exposition, 
we will also assume access to the graph skeleton, which is assumed to be known in prior works~\citep{miller2020strategic}. Please see Appendix~\ref{sec:assumption_justification} for further discussions on these two assumptions.

In a nutshell, Algorithm~\ref{alg:quadratic_alg} is a top-down algorithm in which we iteratively discover root nodes in the current subgraph. The crux of the algorithm is a root-node test based on the observation that only a root node $V'$, controlling for all its ancestors $\anc(V')$, will be such that  each of its children $V''$ satisfy $(V'' | V', \anc(V'))_{\Dcal_i} = (V'' | V', \anc(V'))_{\Dcal_0}$. This is because all induced interventions (upstream) will be blocked. Identifying the root means that every node it is adjacent to in the subgraph must be a child. We may then orient accordingly and recurse on the remaining subgraph. Please see Section~\ref{sec:quad_cost_proofs} in the Appendix for more details.

\begin{theorem} \label{thm:quad_thm}
Algorithm~\ref{alg:quadratic_alg} recovers the full graph structure with $n$ deployments.
\end{theorem}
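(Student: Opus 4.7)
The plan is to prove the theorem in three stages: (i) characterize which nodes are intervened upon in $\Dcal_i$ when $f = X_i$ is deployed; (ii) show that the conditional-distribution test on Line~\ref{alg_line:natural_root_test} is both sound and complete for identifying the root of the current $\text{SG}$; (iii) handle the outcome node $Y$ and close via induction on $|\text{SG}|$.

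For stage (i), I would write down the first-order conditions for the best-response program in Eq.~\ref{eqn:best_response} with $f(x';\theta) = x'_i$. Because $c(a;x) = \sum_j c_j(a_j; x_{S_j})$ is separable, the FOC for $a_j$ reads $\partial x'_i / \partial a_j = \lambda\, c'_j(a_j; x_{S_j})$. For $j \notin \anc(i) \cup \{i\}$, the LHS causal derivative vanishes and strict monotonicity of $c_j$ in $|a_j|$ forces $a_j^* = 0$. For $j = i$ the LHS equals $1$, while the key hypothesis $c'_i(0; x_{S_i}) = 0$ forces $a_i^* \neq 0$; likewise, for $j \in \anc(i)$ a generically nonzero causal derivative together with $c'_j(0) = 0$ forces $a_j^* \neq 0$. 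Thus the set of intervened nodes in $\Dcal_i$ is exactly $\{i\} \cup \anc(i)$, and (because $S_j \subseteq \anc(j) \cup \{j\}$) each $a_j^*$ is a measurable function of $U_{\anc(i)\cup\{i\}}$ alone.

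For stage (ii), I would use the loop invariant that $S$ is ancestrally closed: every node added to $S$ was the root of its $\text{SG}$ at the moment of removal, so all its ancestors were already in $S$. When the loop tests $X_i \in \text{SG}$ we have $S_{\text{control}} = \anc_{\Gcal}(X_i) \subseteq S$. If $X_i$ is truly a root of $\text{SG}$, all true ancestors lie in $S$ and in $\anc_{\Gcal}(X_i)$, so the conditioning set $\{X_i\} \cup S_{\text{control}}$ contains every intervened node identified in stage (i); any adjacent $V \in \text{SG}$ must be a child of $X_i$ (otherwise $V$ would be an ancestor and hence already in $S$), and $V$'s non-intervened parents together with $U_V$ are independent of $U_{\anc(i)\cup\{i\}}$, so $(V \mid X_i, S_{\text{control}})_{\Dcal_i} = (V \mid X_i, S_{\text{control}})_{\Dcal_0}$. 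Conversely, if $X_i$ is not a root of $\text{SG}$, then some parent $P$ of $X_i$ must lie in $\text{SG}$ (else all parents and hence all ancestors of $X_i$ would be in $S$). Such $P$ is adjacent to $X_i$ in the skeleton and intervened upon in $\Dcal_i$ but excluded from $S_{\text{control}}$; the unabsorbed intervention shifts $(P \mid X_i, S_{\text{control}})_{\Dcal_i}$ away from $(P \mid X_i, S_{\text{control}})_{\Dcal_0}$, causing the test to flag $X_i$. The main obstacle here is formalizing this ``shift'' to rule out knife-edge cancellations, which I would address with a faithfulness-style non-degeneracy assumption on the SCM and cost parameters.

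For stage (iii), observe that since $Y$ cannot be intervened upon, the algorithm never deploys $f = Y$ and excludes $Y$ from the inner loop, declaring $Y$ a root only by elimination. Whenever $Y$ is indeed the root of the current $\text{SG}$, every $X_i \in \text{SG}$ has some SG-parent (possibly $Y$ itself via a chain of intervened intermediate ancestors), and the completeness half of stage (ii) causes every $X_i$ to fail the root test; the edge case where $X_i$'s only SG-parent is $Y$ still induces a generic shift in $(Y \mid X_i, S_{\text{control}})_{\Dcal_i}$ because $X_i$ is intervened while $Y$ is not. Conversely, if some $X_i$ is the true SG-root, stage (ii) guarantees the algorithm identifies it before reaching the $Y$-case. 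Each outer-loop iteration strictly reduces $|\text{SG}|$ by one, so the algorithm terminates with a fully oriented $\Gcal$ in at most $n+1$ iterations, consulting only the distributions $\Dcal_0, \Dcal_1, \ldots, \Dcal_n$ and hence using only the $n$ deployments $\{f = X_i\}_{i \in [n]}$.
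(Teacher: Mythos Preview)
Your high-level structure matches the paper's: characterize the intervened set under $f=X_i$, maintain the invariant that $S$ is ancestrally closed, and show the conditional-distribution test is sound and complete for detecting the root of $\text{SG}$. However, there is a genuine gap in your soundness direction (``root passes test''), and a methodological difference in the completeness direction.

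\textbf{The gap.} Your claim that ``$V$'s non-intervened parents together with $U_V$ are independent of $U_{\anc(i)\cup\{i\}}$'' is false in general. Consider $A \to X_i$, $A \to W$, $X_i \to V$, $W \to V$: here $W$ is a non-intervened parent of $V$ but depends on $U_A \in U_{\anc(i)}$. More subtly, even after conditioning on $\tilde{X}_i, \tilde{X}_{\anc(i)}$, you cannot simply say ``we have conditioned on every intervened node, so done'': conditioning on an intervened $\tilde{X}_k$ can \emph{open} colliders of the form $\cdot \to \tilde{X}_k \leftarrow I_k^i$, potentially creating new active paths from $I_k^i$ to $\tilde{V}$. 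The paper closes this via a dedicated d-separation lemma in the augmented graph $\tilde{G}$ (with explicit intervention nodes $I_k^i$ and shared exogenous nodes $U_m$), proving by a careful case analysis that every path $I_k^i \leadsto \tilde{V}$ is blocked by $\{\tilde{X}_i\}\cup\tilde{X}_{\anc(i)}$. It then invokes a stated Parameter Faithfulness assumption to convert ``$\tilde{V}\indep I_k^i \mid \tilde{X}_i,\tilde{X}_{\anc(i)}$ for all $k$'' into $(V\mid X_i,S_{\text{control}})_{\Dcal_i}=(V\mid X_i,S_{\text{control}})_{\Dcal_0}$. Your argument needs this graphical lemma (or an equivalent) to be correct.

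\textbf{The difference.} For ``non-root fails test,'' you rely on your stage-(i) claim that \emph{every} ancestor is intervened upon, and then argue the parent $P\in\text{SG}$ carries an unabsorbed shift. The paper does not establish (or need) that all ancestors are intervened; it only proves $a_i^*\neq 0$ via the defining property $c_i'(0;x_{S_i})=0$. Its completeness argument instead uses the collider $P \to \tilde{X}_i \leftarrow I_i^i$: conditioning on $\tilde{X}_i$ opens it, making $\tilde{P}$ dependent on the guaranteed-non-null $I_i^i$, and Parameter Faithfulness gives the distributional shift. This is more parsimonious and also cleanly covers the edge case where the only SG-parent is $Y$ (since the argument never requires the parent itself to be intervened). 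Your route is not wrong, but it leans on a stronger intervention characterization than the cost-class hypotheses strictly deliver.
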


\begin{proofsketch}
We first characterize the confounding by the induced intervention. We show that in $\Dcal_i$, outside of $i$, only $i$'s ancestors may be intervened upon. In particular, none of $i$'s descendants are intervened upon in $\Dcal_i$.

With this, we then argue that, subject to a notion of parameter faithfulness, a node $X_i$ is a root of the current subgraph $SG$ iff $(V | X_i, \anc_{GS}(X_i))_{\Dcal_i} = (V | X_i, \anc_{GS}(X_i))_{\Dcal_0}$ for every node $V$ it is adjacent to (Condition~\ref{alg_line:natural_root_test}). This is because, controlling for its ancestors, roots of the subgraph will be such that each of its edge with its child will be unconfounded. Also, we show this will not hold for non-root nodes and in particular for the node's edge with its parent.

Thus, using Condition~\ref{alg_line:natural_root_test}, we may iteratively discover a $X$ root in the subgraph, orient its edges with its children and then recurse on the remaining subgraph. The only other case to handle is when no $X$ root is found. By process of elimination, $Y$ must be the only root in the subgraph and its edges in the subgraph are oriented accordingly.
\end{proofsketch}

\begin{algorithm}[t]
\caption{Optimization Algorithm under Linear SCM and Linear Cost}
\label{alg:linear_alg}
\begin{algorithmic}[1]
\State Deploy $f(x) = x_1$, $f(x)= -x_1$ \Comment{collect initial pair of distributions}
\State Compute $\Ecal_{\Dcal_0}[X] = (\Ecal_{\Dcal_1}[X] + \Ecal_{\Dcal_{-1}}[X]) / 2$
\State Initialize $S = \{\Dcal_1, \Dcal_{-1}\}$, $W = \{\Ecal_{\Dcal_1}[X] - \Ecal_{\Dcal_{-1}}[X]\}$
\For{$i = 2, ..., n$}\label{alg_line:all_linear_loop}
    \State Compute some $w_i$ in the nullspace of $W^T$ using SVD of $WW^T$ \Comment{$w_i$ is such that all prior interventions cannot change the score under $w_i$}
    \State Deploy $f(x) = w_i^Tx$, $f(x) = -w_i^Tx$ and obtain $\Dcal_i, \Dcal_{-i}$
    \If{$\Dcal_i \in S$} \Comment{encounter duplication} \label{alg_line:non_observe_linear}
        \State $W \leftarrow W \cup \{w_i\}$
    \Else \Comment{observe pair of new distributions with new underlying intervention}
        \State $S \leftarrow S \cup \{\Dcal_{i}, \Dcal_{-i}\}$
        \State $W \leftarrow W \cup \{ \Ecal_{\Dcal_i}[X] - \Ecal_{\Dcal_0}[X]\}$
        \State \textbf{if} $|S| = 2k:$ \textbf{break} \Comment{only $2k$ pairs of distinct distributions are possible}
    \EndIf
\EndFor
\State $\Pi = \{\}$ \Comment{initialize set of all (MSE, Improvement) pairs}
\For{$i = 1, ..., |S|$}\label{alg_line:all_linear_pareto}
    \State $\Dcal_i = S[i]$
    \State Solve the QP program: \Comment{computes lowest MSE attainable under a particular intervention} \begin{equation}\label{eqn:min_mse_given_intervention}
        \begin{aligned}
        \min_{w} \quad & R_{\Dcal_i}(w)\\
        \textrm{s.t.} \quad & w^T (\Ecal_{D_i}[X] - \Ecal_{\Dcal_0}[X]) \geq w^T (\Ecal_{D_j}[X] - \Ecal_{\Dcal_0}[X]) \quad \forall j \neq i
        \end{aligned}
        \end{equation}
    
    to obtain smallest attainable MSE, $R_i$.
    \State $\Pi \leftarrow \Pi \cup \{(\Ecal_{\Dcal_i}[Y], R_i)\}$ \Comment{add one pair of (MSE, Improvement)}
\EndFor
\State \textbf{return} all non-dominated pairs in $\Pi$
\end{algorithmic}
\end{algorithm}

\subsection{Adaptive Exploration Strategy under Linear SCM and Linear Cost} \label{subsec:linear_cost}

As shown in the case of Algorithm~\ref{alg:quadratic_alg}, in order to do discovery, per-node incentivization is such that for every node $i$, there exists some $f_i$ such that node $i$ will be intervened upon. However, this will not be true for all cost functions and then the per-node incentivization strategy will not work. It is instructive then to study settings where the cost is such that $\frac{\partial c(a; x)}{\partial a_i} \Bigr|_{a_i=0} \neq 0$ as in such cases, incentivizing $i$ will not necessarily lead to $i$ being intervened upon.

One such case is the setting of linear SCMs and linear cost functions. As is commonly assumed in recourse literature, we will allow for immutable features, i.e., a feature $i$ with $c_i = \infty$.
In fact, causal discovery may be impossible in this case. For example, when most features are immutable, we may only observe interventions on a small part of the graph, with the rest of the graph never intervened upon. 

Nevertheless, our ultimate goal is to obtain the Pareto front and causal discovery is only a means to end. We show that in this particular setting, it is possible to obtain the Pareto frontier of risk vs improvement, without causal knowledge. This case-study introduces the nuance that causal knowledge is sufficient, but not necessary for the tradeoff optimization. 
The key observation we will leverage is that there are $2k$ possible distributions that may be induced, where $k \leq n$ is the number of mutable features.

\paragraph{Algorithm} At a high level, Algorithm~\ref{alg:linear_alg} first induces all possible interventions by iteratively deploying models $w_i$ orthogonal to interventions seen in prior rounds. This is intuitive since $w_i$ will be such that the previous interventions will not change the score. If there is at least one unseen intervention that can increase the score, we will be able to observe a new intervention and the corresponding distribution. Now, the only time $w_i$ will not induce a new intervention is when all unseen interventions \emph{also} will not change the score under $w_i$. However, we prove that this event (Line~\ref{alg_line:non_observe_linear}) cannot happen too many times, and eventually we observe distributions corresponding to all inducible interventions.

Next, after inducing all possible interventions, for each possible intervention in the for-loop on line~\ref{alg_line:all_linear_pareto}, we compute the model that induces this intervention and attains the lowest MSE (QP Program~\ref{eqn:min_mse_given_intervention}). Since the improvement is fixed when the underlying intervention is fixed, we have just enumerated the set $\Pi$ of all possible (risk, improvement) pairs that \emph{could be} on the Pareto front. The rest of models are dominated risk-wise. Thus, it just remains to retain all non-dominated (risk, improvement) pairs in $\Pi$ to obtain the Pareto frontier. Please see Section~\ref{sec:lin_cost_proofs} in the Appendix for the full proof.

\begin{theorem}
\label{thm:linear_alg_thm}
Algorithm~\ref{alg:linear_alg} computes the Pareto-Frontier using at most $2n$ deployments.
\end{theorem}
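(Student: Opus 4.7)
The plan is to handle the two halves of Algorithm~\ref{alg:linear_alg} separately: the exploration loop (lines 4--13) enumerates every distribution the agents can be incentivized into, and the optimization loop (lines 14--20) finds the minimum-MSE mechanism inducing each such distribution. The $2n$ deployment bound is structural --- the initial $\pm x_1$ probes use $2$ deployments, and the loop over $i = 2,\ldots,n$ contributes at most $2(n-1)$ more via the $\pm w_i$ pairs.

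I would first observe that in the linear SCM, $\tilde{X} = X + Ba^*(w)$, so each induced distribution $\Dcal(w)$ is $\Dcal_0$ shifted by the deterministic vector $Ba^*(w)$. Because the best response $\arg\max_a w^T B a$ subject to $c(a) \leq b$ optimizes a linear functional over a scaled $\ell_1$ ball (with $c_j = \infty$ for immutable features), its optimum is a vertex, so $a^*(w) \in \{\pm (b/c_j)\, e_j : j \text{ mutable}\} \cup \{0\}$. This yields at most $2k$ distinct nontrivial induced shifts, where $k \leq n$ is the number of mutable features, so the Pareto front is realized over a finite menu of at most $2k$ distinct distributions.

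Next I would prove the exploration loop hits every inducible distribution by maintaining the invariant: at the top of iteration $i$, every yet-unobserved inducible shift $Ba'$ lies in $\mathrm{span}(W)$. The base case is established by the initial deployments of $\pm x_1$, which place $\pm(\Ecal_{\Dcal_1}[X] - \Ecal_{\Dcal_0}[X])$ into $W$. For the inductive step, let $w_i$ be (a generic element of) the null space of $W^T$; if an unobserved $Ba'$ were outside $\mathrm{span}(W)$ then $w_i^T B a' \neq 0$, and between $+w_i$ and $-w_i$ at least one deployment would make the corresponding vertex (or a strictly better unobserved one) the best response, adding a new distribution to $S$ and appending $B a^*_i$ to $W$. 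When the duplication branch fires, every unobserved $Ba'$ must be orthogonal to $w_i$, so appending $w_i$ itself to $W$ preserves the invariant. Since $|W|$ grows by one each iteration and is bounded by $n$, the loop either exits with $W^\perp = \{0\}$ --- in which case the invariant forces every inducible shift to have already been observed --- or the $|S| = 2k$ early-termination check fires first.

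For the Pareto-front step, the improvement $\Ecal_{\Dcal(w)}[\tilde{Y}]$ depends on $w$ only through $a^*(w)$, so any Pareto-optimal mechanism achieves the minimum MSE among all mechanisms inducing a fixed $a^*$. The QP in Eq.~\ref{eqn:min_mse_given_intervention} encodes the constraint that $\Dcal_i$ is the best response to $w$ (the strategic agent prefers intervention $i$ over every other inducible one, which by the exploration phase are all enumerated), so its optimum is the minimum-MSE mechanism inducing $\Dcal_i$. Iterating over all enumerated $\Dcal_i$ and discarding dominated $(R_i, \Ecal_{\Dcal_i}[\tilde{Y}])$ pairs then returns the exact Pareto frontier. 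The main obstacle I expect is the invariant argument above, specifically the claim that the SVD yields a \emph{generic} null vector: a canonical basis returned by SVD could unluckily be orthogonal to a particular unobserved $Ba'$. A small random perturbation of $w_i$ inside $W^\perp$ should suffice, or alternatively one can show that each ``non-generic'' iteration still extends $W$ by a dimension, so the $n$-iteration budget still discovers everything.
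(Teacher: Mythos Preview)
Your overall decomposition and the Pareto-front half (the QP step and the non-domination filter) are correct and match the paper. The deployment count is also right. The gap is in your exploration argument.

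The invariant you state --- ``at the top of iteration $i$, every yet-unobserved inducible shift $Ba'$ lies in $\mathrm{span}(W)$'' --- is false already at the base case. After the initial $\pm x_1$ deployments, $W$ contains one vector (proportional to a single column $Be_{i^*}$), so $\mathrm{span}(W)$ is one-dimensional; but the $k-1$ unobserved shifts $Be_j$ are linearly independent of $Be_{i^*}$ since $B$ is full rank, hence none of them lies in $\mathrm{span}(W)$. Your inductive step has a second problem you yourself flag: ``if an unobserved $Ba'$ were outside $\mathrm{span}(W)$ then $w_i^T B a' \neq 0$'' is not true for the specific SVD-returned $w_i$, only for a generic null-space element, and the algorithm does not randomize. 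Finally, even granting both points, the termination clause does not follow from the invariant as written: when $W^\perp=\{0\}$ we have $\mathrm{span}(W)=\Rcal^n$, so ``every unobserved shift lies in $\mathrm{span}(W)$'' is vacuously satisfied whether or not any shifts remain unobserved.

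The paper's argument is precisely the dimension-counting route you sketch as your ``alternative,'' and it sidesteps both issues. Whenever the duplication branch fires, $w_i$ is orthogonal to the already-observed shifts (by construction) \emph{and} to every unobserved shift $Be_j$ (otherwise $\pm w_i$ would have elicited an unobserved intervention, as you correctly argue). Hence every duplication-branch $w_i$ lies in $W^0:=\mathrm{ker}\,[Be_1,\ldots,Be_k]^\top$, and these $w_i$'s are mutually orthogonal since each is chosen in the null space of all prior $W$-entries. But $\dim W^0 = n-k$ (because $B$ is full rank), so the duplication branch can fire at most $n-k$ times over the $n-1$ iterations; the remaining $\geq k-1$ iterations each uncover a new intervention, and together with the initial one this exhausts all $k$ mutable coordinates. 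No genericity and no span-based invariant are needed.
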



Thus, through Algorithm~\ref{alg:linear_alg}, we show that we \emph{need not always} have to discover the causal structure in order to compute Pareto optimal models that serve as both accurate predictors and beneficial incentives.

\section{Discovery Algorithm under General Costs}
\label{sec:quad_time_alg}
\begin{algorithm}[t]
\caption{Discovery Algorithm for Additive Graphs under General Cost}
\label{alg:general_alg}
\begin{algorithmic}[1]
\State Input: Distribution $\Dcal_0$, Graph Skeleton $\text{GS}$
\State $\Gcal = \text{GS}$ \Comment{partially oriented graph $\Gcal$}
\State $\text{SG} = \{X_i\}_{i \in [n]} \cup \{Y\}$ \Comment{subgraph of unoriented nodes $SG$}
\State $S = \{\}$ \Comment{set of oriented nodes $S$ (complement of nodes in $SG$)}
\While{$|SG| > 1$}
\State Y\_leaf = True
\For{$X_i \in \text{SG} \setminus \{Y\}$}
    \State $X_{P_i} \leftarrow$ nodes adjacent to $X_i$ in $\text{GS}$ \label{alg_line:general_regression_args}
    \State \textbf{if} $P_i = \emptyset$, $\hat{g}_i = 0$; \textbf{else} $\hat{g}_i = (X_i | X_{P_i})_{\Dcal_0}$
    \State Deploy $f' = X_i - \hat{g}_i(X_{P_i})$ to obtain distribution $\Dcal'$ \label{alg_line:general_new_deploy}
    \State X\_leaf = True
    \For{node $V \in \text{SG} \setminus \{X_i\}$} \Comment{test if $X_i$ is a leaf node in $SG$}
        \If{$\Ecal_{\Dcal'}[V] \neq \Ecal_{\Dcal_0}[V]$} \label{alg_line:general_test}
            \State X\_leaf = False
            \State \textbf{break}
        \EndIf
    \EndFor
    \If{X\_leaf}
        \For{node $V \in \text{SG}$ adjacent to $X_i$ in $\text{GS}$} \label{alg_line:general_orientation} \Comment{each adj node is parent as $X_i$ is leaf}
            \State Orient $V \rightarrow X_i$ in $\Gcal$, Remove edge $V - X_i$ from $\text{GS}$
        \EndFor
        \State $\text{SG} \leftarrow \text{SG} \setminus \{X_i\}$, $S \leftarrow S \cup \{X_i\}$ \Comment{update set of (un)oriented nodes $SG$ and $S$}
        \State Y\_leaf = False
        \State \textbf{break}
    \EndIf
\EndFor
\If{Y\_leaf} \Comment{no $X$ leaf node found in $SG$, by elimination, $Y$ must be the (only) leaf}
    \For{node $X_j \in \text{SG}$ adjacent to $Y$ in $\text{GS}$} \Comment{each adj node is parent as $Y$ is leaf}
        \State Orient $X_j \rightarrow Y$ in in $\Gcal$, Remove edge $X_j - Y$ from $\text{GS}$
    \EndFor
    \State $\text{SG} \leftarrow \text{SG} \setminus \{Y\}$, $S \leftarrow S \cup \{Y\}$ \Comment{update set of (un)oriented nodes $S$ and $SG$}
\EndIf
\EndWhile
\State \textbf{return} $\Gcal$ \Comment{returns fully oriented graph}
\end{algorithmic}
\end{algorithm}

In this section, we introduce an algorithm 
that allows us to handle a more general class of cost functions, which includes both the linear and quadratic cost function that we just saw. We will assume all features are mutable, but make no assumptions about how many interventions may be induced, which may be infinite.
To recap, we saw in Algorithm~\ref{alg:quadratic_alg}
that discovering the graph requires access 
to a collection of interventional distributions, 
where each node $i$ is intervened on in at least one distribution of the set.
However, 
the per-node incentivization strategy will not always induce such a set.
And unlike Algorithm~\ref{alg:linear_alg}, we do not wish to rely on there being a finite number of inducible interventions.
Below, we will describe an adaptive exploration strategy
that ensures each node will be intervened upon, and develop a discovery algorithm to pair with this exploration strategy.



\paragraph{General Separable Heterogeneous Cost}~\label{cost_class:general} We now define a more general class of cost functions that contains both the linear and the quadratic cost function:

$$c(a; x) = \sum_{i=1}^n c_i(a_i; x_{S_i}),$$
where $S_i \subseteq [n]$ and the cost is heterogeneous when $S_i \neq \emptyset$. Each cost function $c_i$ is assumed to be strictly increasing in the magnitude of $a_i$ and again, we assume $c_i(0; x_S) = 0$. Note that we no longer require the condition $\frac{\partial c(a; x)}{\partial a_i} \Bigr|_{a_i=0} = c'_i(0; x_{S_i}) = 0$ as in Cost Function Class~\ref{cost_class:quadratic}.

\paragraph{Algorithm}  The key observation behind Algorithm~\ref{alg:general_alg}
is that if we know $g_i$, there is an apt choice of $f_i$ 
that is guaranteed to induce an intervention 
only on $X_i$. 
However, at the start, we do not know $g_i$, the SCM parameter. 
We develop an algorithm that learns $g_i$ on the fly, 
and uses it for orientation.

\begin{proposition}
\label{prop:only_intervention}
Deploying $X_i - g_i(X_{\pa(i)})$ induces 
an intervention only on node $X_i$.
\end{proposition}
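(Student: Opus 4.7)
The plan is to substitute the deployed mechanism $f(x') = x'_i - g_i(x'_{\pa(i)})$ into the best-response program (\ref{eqn:best_response}) and show that the individual's objective collapses to a function of $a_i$ alone. Under the ANM assumption the structural equation for node $i$ reads $x'_i = g_i(x'_{\pa(i)}) + u_i + a_i$, so
\begin{equation*}
f(x') \;=\; \bigl[g_i(x'_{\pa(i)}) + u_i + a_i\bigr] - g_i(x'_{\pa(i)}) \;=\; u_i + a_i.
\end{equation*}
The cancellation is the crux of the argument: no matter what interventions $a_j$ ($j \neq i$) do to the other variables, $x'_{\pa(i)}$ cancels out of $f(x')$ entirely, so the score an individual can earn depends only on their own realized $u_i$ and on $a_i$.

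Given this, the best-response program reduces to $\max_a (u_i + a_i)$ subject to $\sum_j c_j(a_j; x_{S_j}) \le b$. I would then invoke the separability of Cost Function Class~\ref{cost_class:general} together with the two structural properties of each $c_j$: $c_j(0; x_{S_j}) = 0$ and $c_j$ is strictly increasing in $|a_j|$. For any candidate solution with $a_{j_0} \neq 0$ for some $j_0 \neq i$, reallocating the budget spent on $a_{j_0}$ (namely $c_{j_0}(a_{j_0}; x_{S_{j_0}}) > 0$) towards $a_i$ strictly increases the feasible value of $a_i$, and hence the objective $u_i + a_i$, by strict monotonicity of $c_i$ in $|a_i|$. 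Therefore the optimum must have $a^*_j = 0$ for every $j \neq i$, i.e.\ only $X_i$ is intervened upon.

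The argument really has no hard step: it is the specific algebraic cancellation enabled by the additive form of the ANM and the fact that the score $f$ depends only on $x'_i$ and $x'_{\pa(i)}$ in a way tailored to subtract off the natural dependence on parents. The one mild subtlety is to check that the budget constraint is actually binding at the optimum (so that freeing up budget translates into a strictly larger $a_i$); this follows because $u_i + a_i$ is strictly increasing in $a_i$ and $c_i$ is continuous and strictly increasing in $|a_i|$, so the optimizer pushes $a_i$ until the budget is exhausted. Edge cases such as $\pa(i) = \emptyset$ require no modification: $g_i(X_{\pa(i)})$ reduces to a constant and the cancellation is unchanged.
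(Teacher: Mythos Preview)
Your proposal is correct and matches the paper's own proof essentially step for step: substitute the mechanism into the best-response program, use the additive (ANM) structural equation to collapse the objective to $u_i + a_i$, and then invoke separability and strict monotonicity of the $c_j$'s to argue that any nonzero $a_{j_0}$ with $j_0 \neq i$ can be profitably reallocated to $a_i$. The extra remarks about the budget binding and the $\pa(i) = \emptyset$ edge case are fine but inessential.
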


At a high-level, Algorithm~\ref{alg:general_alg} is a bottom-up algorithm that leverages this fact to iteratively discover leaf nodes in the current subgraph. The crux of the algorithm is a leaf-node test based on the observation that intervening on a leaf only changes (and in particular increases) the leaf node itself and no other node in the subgraph. Identifying the leaf means that every node it is adjacent to in the subgraph must be a parent. We may then orient accordingly and recurse on the remaining subgraph. Please see Section~\ref{sec:gen_cost_proofs} in the Appendix for proof of correctness.

Our algorithm requires access to the natural distribution. 
And again, we will assume access to the graph skeleton, which has also been assumed to be accessible in prior works~\citep{miller2020strategic}.
Please see Appendix~\ref{sec:assumption_justification} for further discussion on these assumptions.

\begin{theorem}
Algorithm~\ref{alg:general_alg} recovers the full graph structure using at most $n(n-1) / 2$ deployments.
\end{theorem}

\begin{proofsketch}
Observe that if node $X_i$ is a leaf of the current subgraph $SG$ and everything in $S$ is downstream of $X_i$, then we can identify its true SCM parameters $g_i$ as $\hat{g}_i = (X_i | X_{P_i})_{\Dcal_0} = (X_i | X_{\pa(i)})_{\Dcal_0} = g_i$, since $P_i = \pa(i)$. Then, making use of Proposition~\ref{prop:only_intervention}, only $X_i$'s mean will shift in the current subgraph $SG$, when $f_i = X_i - \hat{g}_i(X_{P_i})$ is deployed. Notice also that since only $X_i$ is intervened upon, this minimally introduces confounding as there has to be at least one intervention on some node.

On the other hand, for non-leaf nodes, we prove that at least one node outside of $X_i$ in the subgraph will have its mean shift due to the induced intervention(s), requiring the mild assumption of Mean Interventional Faithfulness~\citep{zhang2021matching}.

With this established, we may iteratively discover a $X$ leaf node in the subgraph, orient its edges with parents and then recurse on the remaining subgraph. The only other case to handle is when no $X$ leaf node is found. By process of elimination, $Y$ must be the only leaf node in the current subgraph and its edges are oriented accordingly.

\end{proofsketch}

\section{Discussion}
We highlight some conceptual insights
that arise from our framework and analysis:

\paragraph{Incentives as an Identification Tool}
In our work, we set out to explore learning 
Pareto optimal scoring mechanisms
in Causal Strategic Prediction via causal discovery.
To perform discovery, we develop algorithms
that leverage the different interventions induced
by different sets of incentives
to uncover the (initially unknown) causal graph. 
While \citet{miller2020strategic} link the 
hardness of discovering optimal mechanisms
to the hardness of causal discovery, 
they do not develop concrete causal discovery algorithms. 
We demonstrate that by observing 
the interventional distributions 
induced by a sequence of incentives,
we can exactly identify causal graphs 
that are otherwise only identifiable 
up to Markov equivalence. 
This initial work opens a new direction in causal discovery 
via soft interventions, which has traditionally
 relied on the ability to target variables directly, versus indirectly via incentives. 
%

\paragraph{Insights about Incentives and Causal Structure}
By considering general SCMs with arbitrary causal relationships
among the variables, 
we paint a richer picture capturing qualitative differences
between different types of variables (e.g., parents versus children)
that bear on their suitability for inclusion in the mechanism.
For example, 
prior works have cautioned against 
the inclusion of proxies in scoring mechanisms,
lest we fall victim to Goodhart's Law 
\citep{hardt2016strategic, karmo2014baltic}. 
However, our work reveals subtle insights
into the many considerations that influence
the suitability of causal descendants:
(i) how much we weigh predictive accuracy
versus improvement;
(ii) how correlated the proxy is with the outcome,
(iii) whether it is easier to manipulate the proxy directly
or indirectly, by intervening on its parents.

\paragraph{Beyond Sufficiency}
While our analysis focuses only on 
causally sufficient graphs, it is easy 
to look beyond the present setup
and see some immediate implications.
Consider the situation
where some parents of the outcome 
are not visible to the decision maker, 
and we possess a proxy 
that is expensive to manipulate.
Here incentives on proxies might induce 
interventions on the unobserved parents
and thus prove to be improvement-optimal, 
exceeding the improvement induced by incentivizing
the observable parents.
Proxies can also prove advantageous 
when sparsity of the mechanism is desirable~\citep{holmstrom1991multitask}.
Incentives on a single predictive but 
expensive-to-manipulate proxy 
can potentially induce interventions 
on a large number of causal parents.

\section{Future Work}\label{sec:future_work}

In closing, while our results establish 
connections between strategic responses and causality, we do make several idealizations to facilitate analysis. We would like to bring these assumptions and abstractions to the reader's attention, with the view that these are exciting directions to pursue in future work.

Firstly, as in prior causal, strategic ML works, we assume causal sufficiency and no sample complexity concerns. Removing the causal sufficiency assumption and accounting for sample complexity considerations are thus important future directions to address. As our results confirm the possibility of causal discovery under strategic manipulation, we believe a particularly exciting direction is developing \emph{practical} algorithms in this challenging setting.
Secondly, while the goal of the paper is to discover the causal graph, there is a need to consider what kind of scoring mechanisms are deployed during the discovery process and ensure they do not incur large risks \emph{during} the discovery process.
Finally, this paper relaxes the assumption that the principal knows the causal graph and assumes that only the individuals knows the true graph. A more general problem to study is when \emph{neither} the institution nor the individual knows the causal graph. We view our work as an useful stepping stone towards developing learning algorithms in this more general setting. The main challenge there is that now the principal needs to simultaneously discover the true graph, while accounting for the (possibly) misspecified causal graph of the individuals. Moreover, the individuals' perceived causal graph will also be updated during this interaction. And so, both moving parts need to be accounted for when performing causal discovery, which certainly adds to what is already a challenging task.

\newpage

\bibliographystyle{abbrvnat}
\bibliography{paper}

\newpage

\appendix

\section{Gaming vs Improvement}\label{sec:game_vs_improve}

\begin{figure}[t]
\centering
\begin{subfigure}{.5\textwidth}
  \centering
  \includegraphics[width=0.7\linewidth]{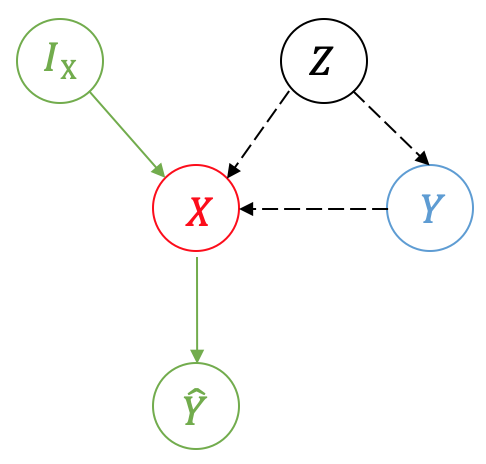}
\end{subfigure}%
\begin{subfigure}{.5\textwidth}
  \centering
  \includegraphics[width=0.6\linewidth]{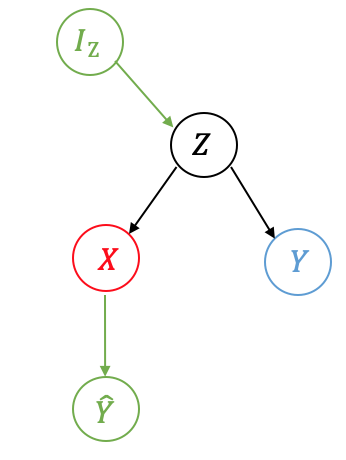}
\end{subfigure}%
\caption{Capturing graphical notion of gaming~\citep{hardt2016strategic} versus improvement~\citep{kleinberg2020classifiers}.}
\label{fig:improve_game_fig}
\end{figure}

Our framework encompasses two canonical viewpoints of feature changes in strategic machine learning: gaming and improvement.

\begin{enumerate}
    \item In the context of gaming under the~\citep{hardt2016strategic} setup, the goal is to predict $Y$ using a function of manipulated features $\hat{Y} = f(\tilde{X})$. The key underlying causal assumption behind gaming is that changes to $X$ will not affect $Y$. Hence, a necessary condition is that none of the features $X$ are an ancestor of $Y$.
    
    An example of such a setting is depicted in the left sub-Figure of Figure~\ref{fig:improve_game_fig}, in which $X$ may share a common parent with $Y$ and/or $X$ is a descendant of $Y$. Note that it has to be the case that if $X$ and $Y$ have common ancestors $Z$, $Z$ cannot be intervened upon. This may be captured in our setting by assigning infinite intervention costs on $Z$ to recover the assumptions made in the gaming setting.

    \item In the context of improvement under the~\citep{kleinberg2020classifiers} setup, the goal is to incentivize changes to some target $Y$ using a mechanism that is a function of $X$. More precisely, to use the nomenclature used in~\citep{kleinberg2020classifiers}, $X$ is the observed features, $Z$ is the (latent) ``effort profile''. The goal is to incentivize a best response profile $z$ that maximize some objective $g$, which we capture in $Y$ with $Y = g(Z)$ (note there is no exogeneous noise in this setting).
    
    To model improvement, a key assumption is that all interventions take place on $Z$ and not $X$, hence there is no gaming (due to inability to game). Again, this may be captured in our framework by setting the cost function to have infinite cost of intervening on $X$.
    
    It is assumed that the entire SCM is known (i.e the relationship between $X$ and $Z$ as well). Hence, the goal is to design $\hat{Y} = f(\tilde{X})$ so as to incentivize interventions $I_Z$ on $Z$, which maximally increases $Y$. Please see the right sub-Figure of Figure~\ref{fig:improve_game_fig} for a depiction of the graph.
    
\end{enumerate}

\newpage

\section{Regret as a Performance Measure}\label{sec:appendix_regret}

\subsection{Comparison with other Discovery Algorithms}

Another natural notion of algorithm optimality is regret. That is, let $\Pi$ be the set of all algorithms that can provably discovery the graph in the causal, strategic prediction setting. We may want to find low-regret discovery algorithm $\hat{\pi}$ such that the cumulative regret for some notion of loss $c$ is small:

$$\max_{G \in \Gcal} \left[\sum_{i=1}^{N^{\hat{\pi}}} c_G(\theta^{\hat{\pi}}_i) - \min_{\pi' \in \Pi} \sum_{j=1}^{N^{\pi'}} c_G(\theta^{\pi'}_j) \right]$$
where $\theta_i^{\pi}$ denotes the model deployed by algorithm $\pi$ at the $i$th step and $N^{\pi}$ denotes the total number of deployments needed by algorithm $\pi$ to provably discover graph $G$. 

Verily, while this may be the an apt goal to aim for, it is apriori unclear if discovery is even possible in this setting. Our paper is the first to address this question and establishes the first set of algorithms that can provably discover the graph in the causal strategic prediction setting. And so, we defer regret analysis to future work, once there is better characterization in the literature of the algorithms that are in $\Pi$.

\subsection{Comparison with Zeroth Order Optimization} 

Alternatively, since ultimately we care about optimizing for the tradeoff between improvement and risk, another natural algorithm that is applicable here is the set of Bayesian optimization algorithms that directly optimize for this loss (in face of unknown graph structure and SCM).

In this subsection, we compare our algorithm against the zeroth order Bayesian optimization procedure, which does not leverage causal structure. Our results demonstrate that the use of causal structure is useful for efficient optimization of the objective, where efficiency is measured in terms of the cost incurred during the optimization process.

\begin{figure}[t!]
    \centering
  \includegraphics[width=0.9\textwidth]{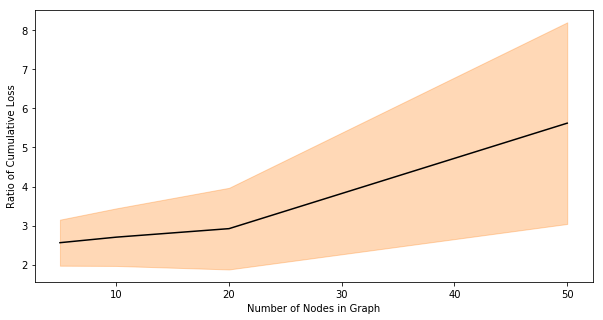}
    \caption{Comparison of cumulative negative improvement incurred during the GP Optimization process versus incurred during the per-node incentivization process. The error bars denote 95\% confidence intervals. The higher the ratio, the larger the cumulative loss incurred by the GP optimization \emph{relative} to the per-node algorithm.}
    \label{fig:improve_ratio_plot}
\end{figure}

\begin{figure}[t!]
    \centering
  \includegraphics[width=0.9\textwidth]{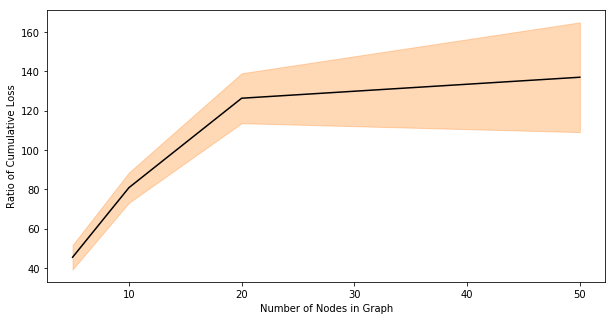}
    \caption{Comparison of cumulative risk incurred during the GP Optimization process versus incurred during the per-node incentivization process. The the error bars denote 95\% confidence intervals. The higher the ratio, the larger the cumulative loss incurred by the GP optimization \emph{relative} to the per-node algorithm.}
    \label{fig:risk_ratio_plot}
\end{figure}

We will compare our per-node algorithm against the Bayesian optimization procedure using Gaussian Processes. Note that our approach allows us to discover the graph, and use this to then optimize the tradeoff for any $\lambda > 0$. By constrast, the Bayesian Optimization procedure only allows for optimizing the tradeoff for some fixed $\lambda$, which determines the objective. Still, we will compare the two one one tradeoff parameter $\lambda$, set to be $1$ in the experiments:

\begin{equation}
\label{eqn:tradeoff}
\begin{aligned}
\min_{\theta} \Ecal_{(\tilde{X}, \tilde{Y}) \sim \Dcal(\theta)}[(f(\tilde{X}; \theta) - \tilde{Y})^2]  - \lambda \Ecal_{(\tilde{X}, \tilde{Y}) \sim \Dcal(\theta)}[g_Y(\tilde{X}_{\pa(Y)})]
\end{aligned}
\end{equation}

For brevity, let us write $R(\theta) = \Ecal_{(\tilde{X}, \tilde{Y}) \sim \Dcal(\theta)}[(f(\tilde{X}; \theta) - \tilde{Y})^2] $ and $I(\theta) = \Ecal_{(\tilde{X}, \tilde{Y}) \sim \Dcal(\theta)}[g_Y(\tilde{X}_{\pa(Y)})]$.

\paragraph{Experiment Description:} The setting we will work with is the linear SCM, quadratic cost setting~\citep{shavit2020causal}. 

We will be comparing the per-node Algorithm~\ref{alg:quadratic_alg} to the GP optimization algorithm. For the GP optimization algorithm, we will set a cutoff of $10^{-4}$ for optimization stoppage. 

We will evaluate both algorithms on a number of graphs by varying \verb|num_nodes|, the number of nodes in the graph. We will experiment with chain graphs, in which for each graph, the SCM parameters are sampled from $\text{unif}[-1,1]$ and the exogenous variables from $N(0,1)$. For a given \verb|num_nodes| value, we randomly generate $30$ graphs and average both algorithm's performance across these graphs.

To measure the performance of both algorithms, we will compare the cumulative loss incurred by the two algorithms by computing the ratio between the GP optimization algorithm and the per-node algorithm. The higher the ratio, the larger the cumulative loss incurred by the GP optimization \emph{relative} to the per-node algorithm.

The per-step loss $c$ that makes up this cumulative loss will be taken as either the MSE of the chosen $\theta$, $R(\theta)$, or the negative improvement incurred by $\theta$, $\ind\{I(\theta) < 0\} I(\theta)$. We estimate these two expectations by drawing $10000$ samples from the distribution and then evaluating the loss accordingly.

As can be seen in Figure~\ref{fig:improve_ratio_plot} and Figure~\ref{fig:risk_ratio_plot}, in terms of both improvement $I(\theta)$ and risk $R(\theta)$, the cumulative loss incurred by the per-node algorithm is lower than and a small fraction of the cumulative loss incurred by the zeroth order optimization procedure. Note that the ratio is much larger when comparing risks instead of improvement. This may be because we only sum over only the improvement if it is negative improvement. By contrast, for risk, we sum over the risk incurred during every time step.

Overall, this experiment suggests that, in terms of cumulative loss, discovering the causal structure is more efficient than optimizing the unknown objective without learning and then leveraging this causal structure.

\newpage

\section{Deferred Material from Section~\ref{sec:tradeoff}}\label{sec:tradeoff_proofs}

\subsection{The Improvement vs Risk Tradeoff}

\begin{proposition}
Model $f = g_Y(X_{\pa(Y)})$ maximizes improvement and minimizes risk.
\end{proposition}
\begin{proof}

\textbf{$g_Y(X_{\pa(Y)})$ is Improvement-Optimal:} Writing out the program, we have that the optimization objective for an individual with features $x$ is:
\begin{equation}
\begin{aligned}
\max_{a} \quad & g_Y(x'_{\pa(Y)})\\
\textrm{s.t.} \quad & x'_i = g_i(x'_{\pa(i)}, u_i) + a_i \quad \forall i \in [n]   \\
\quad & c(a_1, ..., a_n; x) \leq b
\end{aligned}
\end{equation}

Note that for each individual, $u_Y$ is realized and a fixed value. Thus, using that the SCM is additive, $y' = g_Y(x'_{\pa(Y)}) + u_Y$. This optimization program is equivalent to:

\begin{equation}
\begin{aligned}
\max_{a} \quad & y' \\
\textrm{s.t.} \quad & x'_i = g_i(x'_{\pa(i)}, u_i) + a_i \quad \forall i \in [n]   \\
\quad & c(a_1, ..., a_n; x) \leq b
\end{aligned}
\end{equation}

where $y'$ is the value of $y$ under soft intervention $a$ on the causal graph. 

Thus, we have that this scoring mechanism incentivizes the intervention $a^*$ (out of all feasible interventions) that maximizes $\tilde{y}$ for every individual. Hence, this scoring mechanism is improvement optimal.

\textbf{$g_Y(X_{\pa(Y)})$ is Risk-Optimal:} We first prove that the MSE of any mechanism $f$ is lower-bounded by $\var(U_Y)$. WLOG let $\Ecal[U_Y] = 0$. Then:
\begin{align*}
\Ecal[(f(\tilde{X}; \theta) - \tilde{Y})^2] & = \Ecal[(f(\tilde{X}; \theta) - g_Y(\tilde{X}_{\pa(Y)}) - U_Y)^2] \\
& = \Ecal[(f(\tilde{X}; \theta) - g_Y(\tilde{X}_{\pa(Y)}))^2] + \Ecal[U_Y^2] \\
& \geq \var(U_Y),
\end{align*}
where the second equality follows from that $U_Y \indep \tilde{X}_{\cdot}$, since every node $\tilde{X}_{\cdot}$ is an ancestor of $Y$.

The result follows from the observation that using $f = g_Y(X_{\pa(Y)})$ attains a MSE of $\var(U_Y)$.

\end{proof}

\begin{proposition}
In Example~\ref{example: two_node}, there exists a SCM and cost structure, where the optimal improvement is $1$ and:

\begin{enumerate}
    \item Any mechanism with low risk $o(\epsilon)$ must have improvement at most $O(\epsilon)$
    
    \item There is a mechanism $f$, which is a function of only the proxy, that has low risk $o(\epsilon)$ and also a high improvement of at least $1 - \epsilon$.
\end{enumerate}

\end{proposition}

\begin{proof}

Consider the setting where the cost is quadratic and equal across the two features, $c(a) = a_1^2 + a_2^2$, and budget $b=1$.

Define $v_1 = \var(X_1), v_Y = \var(U_Y)$ and $v_2 = \var(U_2)$. Then, for linear model $f = w_1X_1 + w_2X_2$, 
its MSE in terms of the variances has closed-form: $(w_1 + \alpha_2w_2 - 1)^2v_1 + (w_2\alpha_2 - 1)^2 v_Y + w_2^2v_2$. 

The optimal best response has $(a^*_1, a^*_2) = (\frac{w_1 + \alpha_2w_2}{\sqrt{(w_1 + \alpha_2w_2)^2 + w_2^2}}, \frac{w_2}{\sqrt{(w_1 + \alpha_2w_2)^2 + w_2^2}})$. And so, the improvement induced by $f$ has closed form: 
$\frac{w_1 + \alpha_2w_2}{\sqrt{(w_1 + \alpha_2w_2)^2 + w_2^2}}$. The optimal improvement is $1$ and is attainable with $w = (1, 0)$ (only the parent is incentivized).

Let SCM be such that $v_1 = v_Y = 1 / \epsilon$ and $v_2 = \epsilon^4$. This means that the variance of $U_Y$ is large, which makes $X_1$ a noisy predictor of $Y$. By contrast, the variance of proxy $X_2$ is small, which makes $X_2$ a good predictor of $Y$:

\begin{itemize}
    \item First consider a SCM where $\alpha_2 = \epsilon$ is small.
    
    We know that the optimal risk attainable is upper bounded by $\epsilon^2$, which is the risk of $w = (0, 1 / \alpha_2) = (0, 1 / \epsilon)$. So it is possible to have very predictive models.
    
    Now, for any predictive model with near-optimal risk $o(\epsilon)$,
    we need that $|w_2\alpha_2 - 1| = o(\epsilon)$ 
    and $|w_1 + \alpha_2w_2 - 1| = o(\epsilon)$. 
    The former implies that $w_2 = 1 / \epsilon + o(1)$ 
    and in combination with the second condition implies that $w_1 = o(\epsilon)$. 
    
    This means that $w_1 + \alpha_2w_2 = 1 + o(\epsilon)$. And so, the improvement of any model that has low-risk can be at most $O(\epsilon)$.

    \item Now, consider a SCM where $\alpha_2 = 1 / \epsilon$ is large.

    We claim that $f = 1 / \alpha_2 X_2$ will now have both low risk and high improvement.

    Indeed, under $w = (0, 1 / \alpha_2) = (0, \epsilon)$, the risk is $\epsilon^2 \cdot v_2 = \epsilon^6$. 
    
    Moreover, $w$ attains a near-optimal improvement: $\frac{w_1 + \alpha_2w_2}{\sqrt{(w_1 + \alpha_2w_2)^2 + w_2^2}} = \frac{1}{\sqrt{1 + \epsilon^2}} \geq 1 - \epsilon$.

    The key difference here is that while we again have to resort to using proxy $X_2$ to predict $Y$, since $\alpha_2$ is large, under equal cost, almost all the budget will be invested in $X_1$ instead of $X_2$. Because increasing $X_1$ increases $X_2$ much more than increasing $X_2$ itself, $X_2$ also a good incentive.

\end{itemize}

\end{proof}

One may naturally conjecture that $\Ecal[Y | X_{\pa(Y)}] = g_Y(X_{\pa(Y)})$ may in fact be improvement-optimal across all graphs. In Proposition~\ref{prop:improve_general_graph} below, we show that this is not the case. It turns out that proxies are important for not just for predicting $Y$, but also for designating incentives in general graphs.

\begin{proposition}\label{prop:improve_general_graph}
$f = g_Y(X_{\pa(Y)})$ is not improvement-optimal under general graphs.
\end{proposition}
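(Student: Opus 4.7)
The plan is to exhibit an explicit counterexample on a general graph that contains a descendant of $Y$. The natural testbed is the chain $X_1 \to Y \to X_2$ of Example~\ref{example: two_node}: it is the minimal graph that departs from the all-ancestors setting considered in the previous proposition, and the analysis around Example~\ref{example: two_node} already hints that proxies of $Y$ can contribute to improvement.

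The plan proceeds in three steps. First, I would fix a concrete instantiation of the SCM (e.g., the linear SCM of Example~\ref{example: two_node} with a specific choice of $\alpha_2$) together with a cost structure --- possibly a heterogeneous cost from Cost Function Class~\ref{cost_class:quadratic} whose per-coordinate weights differ across $a_1$ and $a_2$. Second, I would compute in closed form the best response and the induced improvement $I_{g_Y}$ under $f = g_Y(X_{\pa(Y)}) = g_Y(X_1)$; since this $f$ depends only on $X_1$, the individual's optimization reduces to a constrained maximization of $\tilde{X}_1$ subject to the cost. Third, I would exhibit a competing mechanism $f^\star$ that incorporates the descendant $X_2$ (for instance $f^\star = w_1 X_1 + w_2 X_2$ for carefully chosen weights, or a non-linear variant) and show that its induced improvement $I^\star$ strictly exceeds $I_{g_Y}$.

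The main obstacle is identifying the SCM/cost regime in which the gap opens. Under a symmetric separable quadratic cost together with an ANM, the best response to $f = g_Y$ maximizes $g_Y(\tilde{X}_{\pa(Y)})$, which by additivity of $U_Y$ is pointwise equivalent to maximizing $\tilde{Y}$; hence no mechanism can outperform $f = g_Y$ in this symmetric regime. The counterexample therefore has to introduce asymmetry: a heterogeneous cost under which intervention on $a_2$ effectively ``pays for'' effort that propagates through the chain to $\tilde{X}_1$, or a non-linear $g_Y$ whose argmax on the budget set diverges from that of an alternative mechanism in a way that raises $\tilde{Y}$. Once such a regime is fixed, the explicit closed-form comparison of $I^\star$ against $I_{g_Y}$ completes the proof, and the takeaway mirrors the qualitative message of the preceding proposition: proxies (descendants of $Y$) can serve not only as predictors but also as beneficial incentives, so restricting $f$ to be a function of $X_{\pa(Y)}$ may leave improvement on the table.
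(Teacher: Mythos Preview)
Your proposal correctly identifies the obstacle but does not escape it. You observe that, under an ANM, the best response to $f=g_Y(X_{\pa(Y)})$ is the pointwise maximizer of $\tilde y=g_Y(\tilde x_{\pa(Y)})+u_Y$ over the feasible set, so no mechanism can beat it improvement-wise. That observation is airtight, and it kills both of your proposed fixes. A heterogeneous cost from Cost Class~\ref{cost_class:quadratic} (or even Cost Class~\ref{cost_class:general}) still defines the \emph{same} feasible set $\{a:c(a;x)\le b\}$ regardless of which $f$ is deployed, because $c$ depends on the unmanipulated $x$; hence $f=g_Y$ remains pointwise improvement-optimal. Your suggestion that ``intervention on $a_2$ pays for effort that propagates to $\tilde X_1$'' cannot work in the chain $X_1\to Y\to X_2$: causality flows the other way, and $a_2$ does not enter the cost of $a_1$ in any of the paper's cost classes. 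Replacing $g_Y$ by a non-linear function is equally ineffective so long as the noise remains additive, for exactly the reason you gave.

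The paper breaks the impasse by abandoning the ANM at the $Y$ node. It takes $Y=U_Y X_1$ with $U_Y\in\{\pm 1\}$ and $X_2=Y$, under the symmetric quadratic cost. Here $g_Y(X_{\pa(Y)})$ is interpreted as $\Ecal[Y\mid X_1]=(2p-1)X_1$, which pushes every individual in the same direction and achieves improvement $2p-1<1$. Because the noise is multiplicative, the proxy $X_2$ reveals the sign of $U_Y$, so the mechanism $f=\ind\{X_2\ge 0\}X_1-\ind\{X_2<0\}X_1$ can tailor the incentive direction per individual and attain the optimal improvement $1$. The crucial idea you are missing is therefore not a cost asymmetry but a non-additive structural equation for $Y$, which is precisely what lets a descendant of $Y$ carry information (about $U_Y$) that $X_{\pa(Y)}$ alone cannot supply.
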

\begin{proof}

Consider the setting where the cost is quadratic and equal across the two features, $c(a) = a_1^2 + a_2^2$, and budget $b=1$.

Now, suppose the causal graph is as follows: $X_1 \rightarrow Y \rightarrow X_2$. And the SCM is such that $Y = U_Y X_1$, $X_2 = Y$, where $X_1 \in U[2, 3]$ and $U_Y = 1$ with probability $p$ and $U_Y = -1$  otherwise. Let $p > 1 / 2$.

$\Ecal[Y | X_{\pa(Y)}] = \Ecal[U_Y]X_1 = (2p - 1)X_1$. This will induce an intervention of $a^*_1 = 1$ for all and induce improvement $p + -(1 - p) = 2p - 1$.

Across all possible models, the optimal improvement is in fact $1$. One improvement optimal model is: 
$$f = \ind\{X_2 < 0\}(-X_1) + \ind\{X_2 \geq 0\}(X_1).$$

To see this, first note that the best response will be such that $a_2^* = 0$. For any individual, since $|x_2| \geq 2$, any change to $x_2$ will not flip the sign and thus change the objective.

Now, since $X_2 < 0 \Leftrightarrow U_Y < 0 \Leftrightarrow g_Y = -X_1$, for an individual with $U_Y = -1$, we incentivize $f = -X_1$, which leads to $a^* = -1$. And similarly, since $f$ is monotonic in $a_1$, for $U_Y = 1$, we incentivize $f = X_1$, which leads to $a^* = 1$.

The improvement induced by the mechanism is thus $p + (1 - p) = 1$. This is optimal across all models, since the improvement induced by any model is upper-bounded by $1$: for any individual, $\tilde{Y} = U_Y \tilde{X}_1 = U_Y(X_1 + a^*_1) = Y + U_Ya^*_1 \leq Y + 1$.

\end{proof}

\textbf{Remark:} Note that $\Ecal[Y | X_{\pa(Y)}]$ will actually induce the worst possible improvement for the sub-population with $U_Y = -1$. And so, $X_2$ is actually very important in terms of designing a more targeted incentive.

\subsection{SCM Recovery for Identifying the Pareto Front}\label{subsec:scm_identifiability}

\textbf{Identifying the SCM:} Once we know the graph structure, we note that an additive SCM may be efficiently identified from the natural distribution $\Dcal_0$ as follows in Algorithm~\ref{alg:scm_id}:

\begin{algorithm}[h]
\caption{Subroutine for identifying $g_V$ for each node $V$}
\label{alg:scm_id}
\begin{algorithmic}[1]
\State Input: Distribution $\Dcal_0$, Graph Structure $\Gcal$
\For{each node $V \in \{X_j\}_{j \in [n]} \cup \{Y\}$}
    \State Set $g_V = \Ecal_{\Dcal_0} [V | \pa_{\Gcal}(V)]$
\EndFor
\end{algorithmic}
\end{algorithm}

In particular, due to L3 (functional) equivalence, we may without loss of generality assume that $\Ecal[U_i] = 0$, and the ANM SCM parameter may be identified up to constant shifts.

Then, for any node $V$, the ANM SCM parameter is identified from the following regression:
$$\Ecal_{\Dcal_0}[V | \pa_{\Gcal}(V)] = \Ecal_{\Dcal_0}[g_V(\pa_{\Gcal}(V)) + U_V | \pa_{\Gcal}(V)] = g_V(\pa_{\Gcal}(V)) + \Ecal_{\Dcal_0}[U_V] =g_V(\pa_{\Gcal}(V)).$$

\textbf{Recovery of distribution of $U_V$:} the distributions of the exogenous variables are identified once the SCM is identified. For each node $V$, we can obtain $U_V = V - g_V(\pa_{\Gcal}(V))$ from natural distribution $\Dcal_0$.

\textbf{Identifying the Pareto front:} the Pareto front is identified if, for any $\theta$, the distribution $\Dcal(\theta)$ induced by $f(\cdot; \theta)$ is identified. 

$\Dcal(\theta)$ may be generated by computing agents' resultant features by first drawing realized, exogenous variables $(u_1, ..., u_n)$, computing their unperturbed features $x$ from $(u_1,...,u_n)$ (that determines the cost) and then computing their best response under $f(\cdot; \theta)$:

\begin{equation}
\label{eqn:best_response}
\begin{aligned}
a^* = \arg\max_{a} \quad & f(x'; \theta)\\
\textrm{s.t.} \quad & x'_j = g_j(x'_{\pa(j)}) + u_j +  a_j \quad \forall j \in [n]   \\
\quad & c(a_1, ..., a_n; x) \leq b.
\end{aligned}
\end{equation}
Note that the corresponding, resultant feature $\tilde{x}$ is such that coordinate-wise: $\tilde{x}_j = g_j(\tilde{x}_{\pa(j)}) + u_j +  a^*_j$.


\newpage

\section{Proof Preliminaries}

\paragraph{Notation} We summarize some conventions used in the paper. Upper case letters to denote random variables $X$ and lower case to denote the realized value of a random variable e.g $X = x$.

For brevity, we use $\Dcal_j$ to refer to the distribution induced by mechanism $f = X_j$. $\tilde{X_i}$ is used to denote the feature $X_i$ in the distribution induced by the deployed mechanism, which will be be specified within the context that $\tilde{X}_i$ is referenced.

Also, we will interchangeably refer $\tilde{X}_j$ as $\tilde{X}_j$  or node $j$, and likewise $X_j$'s ancestors  as either $\anc(X_j)$ or $\anc(j)$. Let the set of feature nodes $\{X_i\}_{i=1}^n$ be $\Xcal$. We will use $(X_{S_1} | X_{S_2})_{\Dcal_i}$ as another way to denote the conditional distribution of $\tilde{X}_{S_1} | \tilde{X}_{S_2}$ under $\Dcal_i$.

\paragraph{Graph Properties:} To analyze how nodes are confounded by interventions, we will make heavy use of graphical criteria. As depicted in Figure~\ref{fig:confounding_fig}, the causal graph $G$ consists of nodes $X_1, ..., X_n$ and $Y$. Let graph $\tilde{G}$ comprise of nodes $\tilde{X}_1, ..., \tilde{X}_n$ and $\tilde{Y}$, which correspond to the features observed in the induced distribution. 

$\tilde{G}$'s structure and SCM will mirror that of $G$. Different from $G$, $\tilde{G}$ will have intervention nodes pointing into node $\tilde{X}_j$ iff $\tilde{X}_j$ is intervened upon. Under intervened distribution $\Dcal_i$, we will call the augmented intervention node pointing into node $\tilde{X}_j$, $I^i_j$.

\section{Deferred Material on Algorithm~\ref{alg:quadratic_alg}} \label{sec:quad_cost_proofs}

\subsection{Algorithm Assumptions}

\textbf{Cost Function Property:} To recap, the key property of Cost Function Class~\ref{cost_class:quadratic} that we will make use of  is that: 
for each $i \in [n]$ and 
for all attainable values of $\{x_j\}_{j \in {\anc(i)  \cup \{i\}}}$, $\frac{\partial c(a; x)}{\partial a_i} \Bigr|_{a_i=0} = c'_i(0; x_{S_i}) = 0$. 
That is, the marginal cost of changing $a_i$ when $a_i = 0$ is always zero.

Note that this condition rules out cost functions where for a descendant $X_j$ of $X_i$, one may choose to set $a_j \neq 0$ in order to lower the cost of changing $a_i$; allowing for this will induce arbitrarily complex confounding. Moreover, the latter condition rules out cost functions like linear cost, which we address later through a different algorithm.

\textbf{Applicable SCM:} The following discovery algorithm is applicable to any causal graph.

\subsection{Proof of Theorem~\ref{thm:quad_thm}}

\subsubsection{Characterization of Confounding}

To begin, we wish to characterize the confounded graph $\tilde{G}$ under per-node incentivization and Cost Function Class~\ref{cost_class:quadratic}. This characterization requires knowing which nodes are intervened upon and as a function of which variables. 

When the objective is $f = x_i$, the individual is optimizing:

\begin{equation}
\label{eqn:alg1_br}
\begin{aligned}
\max_{a} \quad & x'_i \\
\textrm{s.t.} \quad & x'_j = g_j(x'_{\pa(j)}, u_j) + a_j \quad \forall j \in [n]   \\
\quad & \sum_{j = 1}^n c_j(a_j; x_{S_j}) \leq b.
\end{aligned}
\end{equation}

Recursively unrolling the objective using the SCM constraints (starting with $x'_i = g_i(x'_{\pa(i)}, u_i) + a_i$), we observe that the objective is a function of variables: $u_{\anc(i) \cup \{i\}}$ (which are fixed) and $a_{\anc(i) \cup \{i\}}$.

Then, using that $c_j$ is strictly increasing in the magnitude of $a_j$ for all $j \in [n]$, we have that $a^*_{k} = 0$ for $k \not \in \anc(i) \cup \{i\}$. Indeed, if some $a^*_{k} = \Delta \neq 0$, then setting $a^*_{k} = 0$ and increasing $a_i$ by a nonzero amount (subject to budget constraint and $\Delta$) will strictly increase the objective. It is always possible to increase $a_i$ by a nonzero amount, since $c_i$ is finite everywhere. In summary, we have that for $a^*_k \neq 0$ only if $k \in \anc(i) \cup \{i\}$, and its value is a function of some subset of $x_{\anc(i) \cup \{i\}}$. 

\textbf{Key Property:} Moreover, the key property we will make use of for this class of cost functions is that under deploy $f = x_i$, in $\Dcal_i$, node $i$ is guaranteed to be intervened upon.

To see this, since $\nabla c \neq 0$ for $a$ on the surface $c(a;x) = b$, we may characterize all stationary points, which includes the best response, via Lagrange multipliers as follows: $a^*$ is such that $1 = \frac{\partial f}{\partial a_i} \Bigr|_{a = a^*} = \lambda \frac{\partial c(a; x)}{\partial a_i} \Bigr|_{a = a^*}  = \lambda c_i'(a^*_i; x_{S_i})$, for $\lambda$ the Lagrange Multiplier. 
With this, we can conclude that $a^*_i \neq 0$, since if $a^*_i = 0$, then the LHS is $1$, while the RHS is $0$.

\subsubsection{Main Proof}

\textbf{Faithfulness Assumptions:} In order for discovery to succeed, some notion of faithfulness is required. For our algorithm to complete, we will require the following faithfulness assumption, which is discussed further in subsection~\ref{subsec:alg_1_faithfulness}. Intuitively, the condition ensures that if a conditional distribution $V | S$ is dependent on some intervention in $\Dcal_i$, then this intervention causes the conditional distribution to shift and differ from the natural conditional distribution $V | S$ under $\Dcal_0$. Conversely, if this conditional distribution is independent of every intervention, then it is no different from the natural conditional distribution.

\textbf{Assumption (Parameter Faithfulness):} \label{assumption:quad_alg_assump} Let $V \in G$ be any node in $G$ and $S$ any subset $S \subset G$ with $V \not \in S$. Let $\Ical^i$ be the set of all non-$\verb|null|$, intervention nodes in the augmented graph $\tilde{G}$ corresponding to $\Dcal_i$, then:
$$\exists I \in \Ical^i \text{ s.t } I \notindep  \tilde{V} | \tilde{S} \Leftrightarrow  (V | S)_{\Dcal_i} \neq (V | S)_{\Dcal_0}$$

\begin{theorem}
Algorithm~\ref{alg:quadratic_alg} recovers the full graph structure with $n$ deployments.
\end{theorem}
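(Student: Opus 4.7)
The plan is to prove correctness and the deployment bound simultaneously by induction on the iterations of the outer \textbf{while} loop, showing at each iteration that the algorithm either correctly identifies some $X$-node that is a root of the current subgraph SG and orients its outgoing edges, or else correctly identifies $Y$ by elimination as the unique remaining root of SG. Since each iteration removes exactly one node from SG, and $|\text{SG}|$ starts at $n+1$, the loop runs for at most $n$ iterations and consumes only the $n$ preplanned deployments $\{f=X_i\}_{i=1}^n$ that generated $\Dcal_1,\ldots,\Dcal_n$.

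The first ingredient is the characterization of where interventions actually occur. Unrolling the objective of Eq.~\ref{eqn:alg1_br} along the SCM shows $x'_i$ depends only on $u_{\anc(i)\cup\{i\}}$ and $a_{\anc(i)\cup\{i\}}$, and since each $c_j$ is strictly increasing in $|a_j|$, any nonzero $a^*_k$ with $k\notin\anc(i)\cup\{i\}$ could be zeroed to free budget for $a_i$, contradicting optimality; so in $\Dcal_i$ all interventions live on $\anc(i)\cup\{i\}$. Moreover the Lagrange condition $1=\lambda\,c'_i(a^*_i;x_{S_i})$ combined with the cost-class assumption $c'_i(0;x_{S_i})=0$ forces $a^*_i\neq 0$, so $X_i$ itself is always among the intervened-upon nodes in $\Dcal_i$.

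The heart of the argument is showing that the conditional-distribution check on line~\ref{alg_line:natural_root_test} is sound and complete for detecting roots of SG. By the inductive hypothesis every previously oriented edge in $\Gcal$ is correct, and a node leaves SG only after all its incident edges into SG have been oriented, so whenever $X_i$ is a root of SG we have $\anc_\Gcal(X_i)=\anc(X_i)$. For soundness, every $V$ adjacent to $X_i$ in GS must then be a child of $X_i$; any path from an intervention $I_k$ with $k\in\anc(X_i)\cup\{X_i\}$ to $V$ begins $I_k\to k$, and a direct chain/fork at $k$ is blocked by conditioning on $k$, while if $k$ is a collider opened by conditioning, the path must exit through some $k'\in\anc(X_i)$ that is necessarily a non-collider and is also conditioned on, again blocking the path. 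Thus $V$ is d-separated from every intervention given $\{X_i\}\cup\anc_\Gcal(X_i)$, and Parameter Faithfulness (Assumption~\ref{assumption:quad_alg_assump}) delivers the required distributional equality. For completeness, if $X_i$ is not a root of SG it has an unoriented parent $P\in$ SG adjacent to it in GS; the length-two path $I_{X_i}\to X_i\leftarrow P$ has $X_i$ as a collider opened by conditioning, and since $P\notin\anc_\Gcal(X_i)$, $P$ remains d-connected to $I_{X_i}$ given $\{X_i\}\cup\anc_\Gcal(X_i)$, so Parameter Faithfulness yields a distributional difference and the test correctly rejects $X_i$.

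Granted the test's soundness and completeness, the inductive step closes easily: if some $X_i$ passes, every adjacent node in SG must be a child (since $X_i$ is a root in SG), so orienting $X_i\to V$ on line~\ref{alg_line:quad_orientation} is valid; if no $X_i$ passes, then $Y$ is the unique remaining root of SG (every DAG has a root), justifying the Y\_root branch and its orientations. The main obstacle is the d-separation accounting in the soundness direction, since it requires verifying that every path leaving an intervention node inside an ancestral set and ending outside the set is blocked by conditioning on the set itself---a fact that boils down to the observation that the path must exit the ancestral set through a conditioned non-collider, but whose verification requires ruling out longer collider-opening detours through other ancestors in the augmented graph.
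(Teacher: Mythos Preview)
Your proposal follows the paper's argument closely: the same inductive skeleton on the \textbf{while} loop, the same localization of interventions to $\anc(i)\cup\{i\}$ (including the Lagrange-multiplier step forcing $a^*_i\neq 0$), the same use of Parameter Faithfulness for both directions of the root test, and the same $Y$-by-elimination branch.

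The gap you flag at the end is real, but you have not located it precisely. Your soundness sketch treats the augmented graph as if it were just $\tilde{G}$ together with source nodes $I_k$, so that every path out of $I_k$ begins $I_k\to\tilde{X}_k$ and must eventually exit the conditioned ancestral set through a non-collider. In the paper's model, however, the augmented graph for $\Dcal_i$ is a \emph{twin network} (Figure~\ref{fig:confounding_fig}): it contains both the pre-intervention copy $G$ and the post-intervention copy $\tilde{G}$, joined through the shared exogenous nodes $U_m$, each of which points into both $X_m$ and $\tilde{X}_m$. A path from $I_k^i$ can therefore reach $\tilde{X}_k$, backtrack through the collider-opening edge $\tilde{X}_k\leftarrow U_k$, enter $G$ via $U_k\to X_k$, wander inside $G$ (none of whose nodes are conditioned on), and re-enter $\tilde{G}$ through some other $U_l$. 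Your ancestral-set-exit argument does not cover these detours. The paper isolates exactly this in Lemma~\ref{lemma:blocked_path}: Part~1 shows that any path segment lying entirely inside $G$ must contain an unconditioned collider and is therefore blocked; Parts~2--3 then carry out an argument close in spirit to your exit-through-a-non-collider reasoning, by taking the node of $\{\tilde{X}_i\}\cup\tilde{X}_{\anc(i)}$ closest to $\tilde{V}$ on the path and deriving a contradiction with its minimality. So the missing piece is not ``longer collider-opening detours through other ancestors'' but the twin-graph detours through the unobserved pre-intervention copy $G$.
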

\begin{proof}

To prove algorithm correctness, we will prove that the children of every node are correctly identified by the algorithm, which implies that the full graph is correctly identified. To do this, we will show that the algorithm always (1) adds a root node of the subgraph $SG$ to the set of nodes $S$.

If (1) is satisfied, then it implies that (2) no node is added before all of its ancestors. This is because when a node is added to $S$, it must be a root and a node with at least one ancestor in the current subgraph cannot be a root node. 

Thus, with (1), the algorithm will be such that the following holds: (3) each node in $S$ has its children correctly and completely identified. This is because when a root node of the subgraph is added to $S$, we identify all nodes adjacent to it in the $SG$ as its children. All nodes adjacent to the root node in $SG$ must only be its children: if a node is adjacent to a parent in $SG$, it will not be a root. Moreover, the nodes adjacent to the root must be all of the children, since by (2) all its children must still be in $SG$. Thus, the algorithm's orientation ensures that the newly added node also satisfies property (3).

We see that (1) is satisfied for $S$ at initialization. To prove (1) always holds, it suffices to show that only root nodes in any subgraph $SG$ will be such that Condition~\ref{alg_line:natural_root_test} is always false and no non-root nodes in any subgraph $SG$ can have Condition~\ref{alg_line:natural_root_test} be always false. For a particular subgraph $SG$, suppose $SG$ and $S$ satisfies (1).

\textbf{Non-Root nodes in $\Xcal$ do not pass test:} We will first show that for any non-root node, Condition~\ref{alg_line:natural_root_test} will hold at least once. This is true because if $\tilde{X}_i$ is a non-root node, it must have a parent $V$.

Consider $(V | X_i, \anc_{GS}(X_i))_{\Dcal_i} = (V | X_i, \anc_{GS}(X_i))_{\Dcal_0}$. In $\tilde{G}$, we have that $V \rightarrow \tilde{X}_i \leftarrow I_i^i$. Thus, $V$ is dependent on $I_i^i$ due to collider $\tilde{X}_i$, which is conditioned upon. Note that from our characterization, since $a^*_i \neq 0$, we have that $I_i^i \neq \verb|null|$. Therefore, from the Parameter Faithfulness condition, $I_i^i \notindep \tilde{V} | \tilde{X}_i, \tilde{X}_{\anc_{GS}(X_i)} \Rightarrow (V | X_i, \anc_{GS}(X_i))_{\Dcal_i} \neq (V | X_i, \anc_{GS}(X_i))_{\Dcal_0}$.

\textbf{Root Nodes in $\Xcal$ pass test:} Let one root node in subgraph $SG$ be $X_i$. Then, by (2), all its ancestors must be in $S = \Xcal \setminus SG$. Since by (3), the children of each node in $S$ are correctly identified, this means that there is an edge in the current $GS$ between $X_i$ and all of its parents, which are in $S$. Moreover, the same holds for $X_i$'s parents and the parents of its parents, as they too are in $S$. Inductively, this means that all of $X_i$'s ancestors are fully identified in $GS$. That is, $\anc_{GS}(X_i)$, which is the set of ancestors of $X_i$ under $GS$ (the current partially oriented DAG) is the true set of ancestors of $X_i$: $\anc_{GS}(X_i) = \anc(X_i)$.

Since $X_i$ is the root in $SG$, we have that every node adjacent to $X_i$ in the $GS$ must be its child and by (2), the set of nodes adjacent to $X_i$ in $SG$ must be its complete set of children.

Let $V$ be a node adjacent to $X_i$, i.e $V$ is a child of $X_i$. Note that $I_k^i \neq  \verb|null|$ only if $k \in \{i\} \cup \anc(X_i)$ from our characterization of confounding (that interventions only happens on $i$ and some subset of $\anc(i)$). So it suffices to check independence of $V$ with respect to each node in the set $\{I_{k}^i: k \in \{i\} \cup \anc(X_i)\}$. 

This we show in Lemma~\ref{lemma:blocked_path}: for all $k \in \anc(i) \cup \{i\}$, $\tilde{V} \perp I_{k}^i | \tilde{X}_i, \tilde{X}_{\anc(X_i)}$. And so, since $V$ is independent of all non-null intervention nodes conditioned on $\tilde{X}_i, \tilde{X}_{\anc_{GS}(X_i)}$, we have from the Parameter Faithfulness assumption that:
$$\forall k \in \anc(i) \cup \{i\}, \tilde{V} \perp I_{k}^i | \tilde{X}_i, \tilde{X}_{\anc_{GS}(X_i)} \Rightarrow (V | X_i, \anc_{GS}(X_i))_{\Dcal_i} = (V | X_i, \anc_{GS}(X_i))_{\Dcal_0}.$$

In summary, Condition~\ref{alg_line:natural_root_test} will never hold if $X_i$ is a root. And so, if there is at least one root in $SG$ that is not $Y$, one of these roots will be added to $S$ as a root of the subgraph. Thus, (1) will be preserved after the new addition, since a node from $SG$ is added iff it is a root.

\textbf{Lone-$Y$ root:} The only other case to consider is when $Y$ is the only root in the subgraph $SG$. As we have shown, if there is a non-$Y$ root in $SG$, it will meet the criteria and be added to $S$. Moreover, we have shown that no non-root node in $\Xcal$ will be such that Condition~\ref{alg_line:natural_root_test} is always false. Hence, if no nodes in $\Xcal$ meet the criteria, it must be the case that, by process of elimination, $Y$ is the only root of the subgraph and will be added accordingly.

\textbf{Termination:} The algorithm terminates when there is only one node left in the subgraph. By (2), it must be a leaf in the full graph. This means that we have also completely identified its children, which is the empty set.

\textbf{Complexity:} Each iteration we run $O(n^2)$ regressions (upper bounded by twice the number of edges in the subgraph) and there are $n$ iterations. $n$ deployments are needed to generate $\{\Dcal_i\}_{i=1}^n$ used to identify the graph.

Note that enumerating all nodes adjacent to a node $X_i$ is needed in Condition~\ref{alg_line:natural_root_test} since there exists graphs like the ``star'' graph, where the center node of the star is not a root, but has only one parent. Such a graph would lead to Condition~\ref{alg_line:natural_root_test} being false for all but one of the nodes (i.e the one parent).

\end{proof}

\textbf{Remark:} Note that our algorithm only makes use of comparisons between the interventional distribution and the natural distribution. Even faster discovery may be possible if we are to use logical rules such as Meek's rules on top of the discovery step (Line~\ref{alg_line:quad_orientation}).

\subsection{Main Lemma}

A key lemma needed in the main proof is the following. Suppose node $V$ is a child of $X_i$, then for all $k \in \{i\} \cup \anc(i)$, $\tilde{V} \perp I_{k}^i | \tilde{X}_i, \tilde{X}_{\anc(X_i)}$. To show this statement, we will prove the following:

\begin{lemma}~\label{lemma:blocked_path}
For any $k \in \{i\} \cup \anc(i)$, any path from $I_k^i \squigglypath \tilde{V}$ is blocked by nodes $\tilde{X}_i, \tilde{X}_{\anc(X_i)}$.
\end{lemma}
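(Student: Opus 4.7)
The plan is to analyze an arbitrary path $\pi$ from $I_k^i$ to $\tilde{V}$ in $\tilde{G}$ and show it is blocked by the conditioning set $C := \{\tilde{X}_i\} \cup \tilde{X}_{\anc(X_i)}$ using d-separation rules. The first observation I would record is that each intervention node $I_m^i$ in $\tilde{G}$ has exactly one edge, the outgoing edge $I_m^i \to \tilde{X}_m$, and no incoming edges. Hence intervention nodes cannot appear as interior vertices of any path (they become dead ends as soon as one tries to enter them). Consequently $\pi$ must begin $I_k^i \to \tilde{X}_k$ and then traverse only non-intervention nodes until it reaches $\tilde{V}$.

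Because $k \in \{i\} \cup \anc(i)$, the second vertex $\tilde{X}_k$ already lies in $C$. I would case-split on the orientation of the second edge of $\pi$ at $\tilde{X}_k$. If it points out of $\tilde{X}_k$ (so $\pi$ locally reads $I_k^i \to \tilde{X}_k \to \tilde{W}$), then $\tilde{X}_k$ is a chain non-collider on $\pi$; being in $C$, it blocks the path. If instead the second edge is incoming to $\tilde{X}_k$ (so $\pi$ locally reads $I_k^i \to \tilde{X}_k \leftarrow \tilde{W}$), then $\tilde{X}_k$ is a collider and conditioning on $\tilde{X}_k \in C$ does not block $\pi$ at this step, so the path survives to $\tilde{W}$.

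In this remaining case I would push the analysis one step further to $\tilde{W}$. Since $W \in \pa(X_k)$ and $X_k \in \{i\} \cup \anc(i)$, I claim $W \in \anc(X_i)$: parents of an ancestor of $X_i$ are themselves ancestors of $X_i$ (concatenate the edge $W \to X_k$ with a directed path $X_k \to \cdots \to X_i$), and when $k=i$ we have $W \in \pa(X_i) \subseteq \anc(X_i)$ by definition. Hence $\tilde{W} \in C$. But at $\tilde{W}$ one of the two edges of $\pi$, namely $\tilde{W} \to \tilde{X}_k$, is outgoing from $\tilde{W}$; therefore regardless of whether the next edge at $\tilde{W}$ is incoming or outgoing, $\tilde{W}$ is a non-collider (a chain or a fork) on $\pi$. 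Since $\tilde{W} \in C$, $\pi$ is blocked at $\tilde{W}$.

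The only subtlety I foresee is verifying that the ``parents of ancestors are ancestors'' closure genuinely places $\tilde{W}$ inside $C$ in every sub-case, including confirming $\tilde{W} \neq \tilde{X}_i$ (acyclicity rules out $X_i$ being simultaneously a descendant of $X_k$ and a parent of $X_k$ when $k \in \anc(i)$, and $X_i$ cannot be its own parent when $k=i$). Once this corner is pinned down, the case analysis above blocks every path $\pi$ uniformly, establishing the lemma.
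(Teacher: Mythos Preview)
Your two-step case analysis is clean and correct for the graph you implicitly work in: $\tilde G$ consisting only of $\tilde X_1,\dots,\tilde X_n,\tilde Y$ together with the intervention roots $I_m^i$. In that graph, either $\tilde X_k$ is a chain non-collider and blocks, or $\tilde X_k$ is a collider and then its structural parent $\tilde W\in C$ is necessarily a non-collider and blocks.

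The gap is that the lemma, as stated and used in the paper, is a d-separation claim in the larger \emph{twin} graph that also contains the exogenous nodes $U_m$ and the pre-intervention graph $G$; each $U_m$ points into both $X_m$ and $\tilde X_m$ (see Figure~\ref{fig:confounding_fig} and Part~1 of the paper's proof). In that graph, when the second edge at $\tilde X_k$ is incoming, the predecessor need not be a structural $\tilde X$-parent: it can be $U_k$. Since $U_k\notin C$, your step ``$W\in\pa(X_k)\Rightarrow W\in\anc(X_i)\Rightarrow\tilde W\in C$'' fails there, and from $U_k$ the path can continue via $U_k\to X_k$ into $G$ and eventually re-enter $\tilde G$ through some other $U_{m}$. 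Ruling out such excursions is exactly the content of Part~1 of the paper's proof (a chain/collider argument inside $G$ exploiting that no $G$-node or descendant thereof is conditioned on), and it is not covered by your proposal.

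Once that branch is handled, your direct argument is a valid and somewhat more elementary replacement for the paper's Parts~2--3, which instead pick the $\tilde X_{k'}\in C$ on the path closest to $\tilde V$ and derive a contradiction via minimality. One small correction: the corner case you should pin down is $\tilde W\neq\tilde V$ (so that a third edge exists to analyze), not $\tilde W\neq\tilde X_i$; acyclicity indeed rules out $V\in\pa(X_k)$ for any $k\in\{i\}\cup\anc(i)$.
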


\begin{proof}

Suppose by contradiction, there is an unblocked path between $I_k^i$ and $\tilde{V}$. Without loss of generality, let this path contain only distinct nodes. We will prove the statement in three parts:

\begin{enumerate}
    \item First, we will argue that this path cannot contain any nodes are in $G$. 

    Since removing nodes $\{U_m\}_{m=1}^n$ disconnects  $X_1, ..., X_n, Y$ (graph $G$) from $\tilde{X_1}, ..., \tilde{X_n}, \tilde{Y}$ (graph $\tilde{G}$), if the path $I_k^i \squigglypath \tilde{V}$ contains nodes in $G$, it must cross from $\{U_m\}_{m=1}^n$ into $G$ and then cross back from $G$ into $\{U_m\}_{m=1}^n$ at least once.
    
    Thus, there exists a segment starting at $U_l$ and ending at $U_r$ for some $l$ and $r$, such that every node in the segment $U_{l} \squigglypath . U_{r}$  lies in $G$.
    
    In this case, we know then that the penultimate node must then be $X_r$, since the only node in $G$ that $U_r$ is adjacent to is $X_r$ and similarly for $U_l$: the segment is thus of the form $U_l \rightarrow X_l \squigglypath . X_r \leftarrow U_r$.
    
    Furthermore, the node $X_{r'}$ that $X_r$ is adjacent to must form a chain, as otherwise $X_{r'} \rightarrow X_r \leftarrow U_r$ is blocked since $X_r$ (and also its descendants which are in $G$) is not conditioned upon. Note that if a collider's descendant is conditioned upon, then so is the collider.
    
    Inductively, we must have that the segment from $X_r$ to $X_l$ must be a chain, i.e $X_l \squigglypath .... \leftarrow X_{r'} \leftarrow X_r \leftarrow U_r$. This follows the same reasoning that if not, there would exist a collider in $G$ that is not conditioned upon.
    
    However, the segment would then be of the form $U_l \rightarrow X_l \leftarrow ... \leftarrow X_{r'} \leftarrow X_r \leftarrow U_r$, making $X_l$ a collider that blocks the path. 
    
    In summary, $l$ has to be downstream of $r$ in order for the path within $G$ from $X_{r}$ to $X_l$ to be unblocked. But then, we would already have a collider in $X_{l}$.

    \item It remains to consider paths that only contain nodes from $\{U_m\}_{m=1}^n$, $\{I_{m}^i\}_{m=1}^n$ and $\tilde{G}$.

    We will make the following observation: 
    
    Any path from a node in $\{U_m: m \in \{i\} \cup \anc(i)\} \cup \{I_m^i: m \in \{i\} \cup \anc(i)\}$ to $\tilde{V}$ must contain some $\tilde{X}_{k'}$ for $k' \in \{i\} \cup \anc(i)$.
    
    This follows because each $U_m$ and $I_m^i$ is adjacent to only $\tilde{X}_m$ in $\tilde{G}$. And so, removing all of $\{\tilde{X}_m: m \in \{i\} \cup \anc(i)\}$ would disconnect $\{U_m: m \in \{i\} \cup \anc(i)\} \cup \{I_m^i: m \in \{i\} \cup \anc(i)\}$ from the rest of $\tilde{G}$, and in particular from node $\tilde{V}$ in $\tilde{G}$.
    
    \item Having established this, consider the ancestor of $\tilde{X}_i$ closest to $\tilde{V}$ on this path. This node must exist due to the preceding point. Let this node be $\tilde{X}_{k'}$, and let the two nodes adjacent to $\tilde{X}_{k'}$ on this path be $V_1$ and $V_2$ with $V_2$ the node closer to $\tilde{V}$.
    
    Then, $\tilde{X}_{k'}$ must form a collider $V_1 \rightarrow \tilde{X}_{k'} \leftarrow V_2$. This is because $\tilde{X}_{k'}$ is conditioned upon, and would otherwise block in a chain $V_1 \rightarrow \tilde{X}_{k'} \rightarrow V_2$, $V_1 \leftarrow \tilde{X}_{k'} \leftarrow V_2$ or as a common parent $V_1 \leftarrow \tilde{X}_{k'} \rightarrow V_2$.

    Now, $V_2$ cannot be in $\tilde{G}$, as otherwise $V_2$ is an ancestor of $\tilde{X}_i$ and is closer to $\tilde{V}$ than $\tilde{X}_{k'}$. So $V_2$ must be either $U_{k'}$ or $I_{k'}^i$ since these two nodes are the only other nodes $\tilde{X}_{k'}$ is adjacent to. However, from the previous point, the path from $V_2$ to $\tilde{V}$ must then go through another ancestor of $\tilde{X}_i$, which contradicts the minimality of $\tilde{X}_{k'}$.
    
\end{enumerate}
\end{proof}

\subsection{Faithfulness Assumption under Linear SCM and Quadratic Cost}\label{subsec:alg_1_faithfulness}

In this section, we explore the condition implied by Assumption~\ref{assumption:quad_alg_assump} under Linear SCM and Quadratic Cost, which is the canonical setting where the best response is smooth in $\theta$. And unlike linear cost, there is an uncountable number of possible interventions. Note that the faithfulness assumption assumed is an ``indirect'' variant of the direct $\mathcal{I}-$faithfulness assumption, which has been shown to be required for the underlying $\mathcal{I}$-Markov equivalence class to be identifiable (i.e necessary in the setting where the intervention targets are unknown). 

\subsubsection{Quadratic Best Response} 

In this setting, each individual is solving:
\begin{equation}
\begin{aligned}
\max_{a} \quad & w^T(x + Ba)\\
\textrm{s.t.} \quad & \frac{1}{2} a^T Ca \leq b.   \\
\end{aligned}
\end{equation}
We assume $C$ is diagonal. Due to scale being a degree of freedom, without loss of generality, $b=1$. With $\lambda$ the Lagrange multiplier, we have that:
$$B^Tw = \lambda C a \Rightarrow a^* = \frac{1}{\lambda} C^{-1} B^Tw,$$
and $\lambda = \sqrt{\frac{1}{2b} w^TBC^{-1}B^Tw }$ to satisfy the feasibility constraint.

\subsubsection{Implications of the Faithfulness Assumption} 

For linear SCMs, the corresponding condition is as follows. 

\begin{proposition}
Let $\mathbf{M} = (X_i, \anc'(X_i))$ represent the regressors, $\Sigma = \text{Var}(\mathbf{M})$ be the covariance matrix of the regressors, and $\sigma_{\textbf{M}, V} \in \mathbf{R}^{|\mathbf{M}|}$ be the vector denoting the covariance of $\textbf{M}$ and $V$, i.e., $(\sigma_{\textbf{M}, V})_j = \text{Cov}(\textbf{M}_j, V)$. Let $\Delta^{(i)}$ encode the shifts due to agent responses in distribution $\Dcal_i$. Then parameter faithfulness holds iff $\Delta^{(i)}_V \neq (\Delta^{(i)}_{\textbf{M}})^\top \Sigma^{-1} \sigma_{\textbf{M}, V}$.
\end{proposition}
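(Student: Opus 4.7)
The plan is to exploit the closed-form best response to reduce the faithfulness question to a single scalar identity on conditional means. The first step is to observe that in this regime the best response $a^{*(i)} = \frac{1}{\lambda_i} C^{-1} B^\top w_i$ to the deployed model $f = X_i$ (so $w_i = e_i$) has no dependence on the individual's realized features: both $C$ (homogeneous, diagonal) and the Lagrange multiplier $\lambda_i = \sqrt{\frac{1}{2b} w_i^\top B C^{-1} B^\top w_i}$ are determined by $w_i, B, C$ alone. Every agent therefore applies the same perturbation, so the induced feature shift $\Delta^{(i)} := B a^{*(i)}$ is a deterministic constant and $\mathcal{D}_i$ is simply the translation of $\mathcal{D}_0$ by $\Delta^{(i)}$, i.e.\ $\tilde{X} \stackrel{d}{=} X + \Delta^{(i)}$.

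Next, I would compute the two conditional distributions explicitly. Since translation leaves covariances unchanged, $\mathrm{Cov}_{\mathcal{D}_i}(V,\mathbf{M}) = \sigma_{\mathbf{M},V}$ and $\mathrm{Var}_{\mathcal{D}_i}(\mathbf{M}) = \Sigma$, so the best linear predictor formula (exact for jointly Gaussian noise in the SCM) gives
\begin{equation*}
E_{\mathcal{D}_i}[V \mid \mathbf{M} = m] \;-\; E_{\mathcal{D}_0}[V \mid \mathbf{M} = m] \;=\; \Delta^{(i)}_V \;-\; \sigma_{\mathbf{M},V}^\top \Sigma^{-1} \Delta^{(i)}_{\mathbf{M}},
\end{equation*}
while the conditional covariance is preserved. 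Consequently $(V \mid \mathbf{M})_{\mathcal{D}_i} \neq (V \mid \mathbf{M})_{\mathcal{D}_0}$ if and only if the scalar $\Delta^{(i)}_V - (\Delta^{(i)}_{\mathbf{M}})^\top \Sigma^{-1} \sigma_{\mathbf{M},V}$ is nonzero.

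Finally, I would connect this calculation to the faithfulness statement in Assumption~\ref{assumption:quad_alg_assump}. The ``graphical independence implies distributional equality'' direction of the faithfulness iff is automatic from d-separation applied to the augmented graph $\tilde{G}$, regardless of parameter values. The substantive content is the reverse direction: whenever some non-null intervention node is d-connected to $\tilde{V}$ given $\tilde{\mathbf{M}}$, the conditional law must actually differ. By the preceding step this is exactly the condition that $\Delta^{(i)}_V \neq (\Delta^{(i)}_{\mathbf{M}})^\top \Sigma^{-1} \sigma_{\mathbf{M},V}$, so parameter faithfulness in the linear-SCM + quadratic-cost setting reduces to this single algebraic non-cancellation.

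The hard part is the last paragraph: one must rule out the pathological configurations in which graphical d-connection exists but the linear combination of shifts conspires to cancel, producing identical conditional means despite the presence of a non-null upstream intervention. This is precisely the generic non-faithful case, and the proposition is asserting that in the linear regime it is captured completely by the stated scalar inequality; the two remaining directions (graphical independence $\Rightarrow$ algebraic equality, and algebraic inequality $\Rightarrow$ graphical dependence) follow because the characterization of $\Delta^{(i)}$ in terms of $B, C, w_i$ forces $\Delta^{(i)}_V$ and $\Delta^{(i)}_{\mathbf{M}}$ to vanish on components that are disconnected from the intervention targets, making the linear combination trivially zero in that case.
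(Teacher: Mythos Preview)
Your proposal is correct and follows essentially the same route as the paper: both arguments observe that under homogeneous quadratic cost the best response is a constant vector, so $\mathcal{D}_i$ is a pure translate of $\mathcal{D}_0$ by $\Delta^{(i)}$, and then compute that the conditional law $(V\mid\mathbf{M})$ changes only through a mean shift equal to $\Delta^{(i)}_V - (\Delta^{(i)}_{\mathbf{M}})^\top \Sigma^{-1}\sigma_{\mathbf{M},V}$. The paper carries this out by explicitly inverting the augmented design matrix to isolate the intercept term of the linear regression, whereas you invoke the best-linear-predictor formula for the conditional mean directly; these are the same calculation in different clothing. Your closing discussion linking the scalar condition back to the graphical d-connection side of the faithfulness biconditional is additional commentary the paper simply omits---its proof treats ``faithfulness holds'' operationally as ``the regression of $V$ on $\mathbf{M}$ differs between $\mathcal{D}_i$ and $\mathcal{D}_0$'' and stops once the intercept difference is computed.
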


\begin{proof}

Let's assume that under the natural distribution $\Dcal_0$, the data $X_{\Dcal_0}$ satisfies
\begin{align*}
    X_{\Dcal_0} = X + \mu^{\Dcal_0}, \,\, \text{where} \\
    X = B X + U, \,\, \text{with} \,\, \mathbf{E}[X] = 0,
\end{align*}
and $\mu^{\Dcal_0}$ is a fixed vector that is the mean of $\Ecal_{\Dcal_0}[X]$. 

The regression coefficients (in the population limit) from the regression of $V$ on $\textbf{M}$ (including the intercept term) in block matrix form is
\begin{align*}
    \beta^{\Dcal_0}_{V | \textbf{M}} &= \mathbf{E}\left[ [1, \mathbf{M}] [1, \mathbf{M}]^\top \right]^{-1} \mathbf{E}\left[ [1, \mathbf{M}] V \right] \\
    &= \begin{bmatrix}
    1 & (\mu^{\Dcal_0}_{\mathbf{M}})^\top \\
    \mu^{\Dcal_0}_{\mathbf{M}} & \Sigma + \mu^{\Dcal_0}_{\mathbf{M}} (\mu^{\Dcal_0}_{\mathbf{M}})^\top
    \end{bmatrix}^{-1} \begin{bmatrix}
    \mu^{\Dcal_0}_V \\
    \sigma_{\textbf{M}, V} + \mu^{\Dcal_0}_V \mu^{\Dcal_0}_{\mathbf{M}}
    \end{bmatrix} \\
    &= \begin{bmatrix}
    1 + (\mu^{\Dcal_0}_{\mathbf{M}})^\top \Sigma^{-1} \mu^{\Dcal_0}_{\mathbf{M}}  & -(\mu^{\Dcal_0}_{\mathbf{M}})^\top \Sigma^{-1} \\
    \Sigma^{-1} \mu^{\Dcal_0}_{\mathbf{M}} & \Sigma^{-1}
    \end{bmatrix} \begin{bmatrix}
    \mu^{\Dcal_0}_V \\
    \sigma_{\textbf{M}, V} + \mu^{\Dcal_0}_V \mu^{\Dcal_0}_{\mathbf{M}}
    \end{bmatrix} \\
    &= \begin{bmatrix}
        \mu^{\Dcal_0}_V - (\mu^{\Dcal_0}_{\textbf{M}})^\top \Sigma^{-1} \sigma_{\textbf{M}, V} \\
        \Sigma^{-1} \sigma_{\textbf{M}, V}
    \end{bmatrix}.
\end{align*}
Here, the first element of the block matrix is the intercept term and the second term represents the regression coefficients for each of the regressors. Note that only the intercept depends on the mean $\mu^{\Dcal_0}$ (in linear regression, shifting the variables only changes the intercept).

Thus, in distribution $\Dcal_i$, the faithfulness condition is equivalent to that the intercept term changes. Under $\Dcal_i$, the observed data is $X_{\Dcal_i} = X_{\Dcal_0} + \Delta^{(i)}$, where $\Delta^{(i)}$ encodes the shifts due to the best response. By a similar analysis, the intercept term under $\Dcal_i$ from the regression of $V$ on $\mathbf{M}$ is
\begin{align*}
    (\beta^{\Dcal_i}_{V | \textbf{M}})_0 &= (\mu^{\Dcal_0}_V + \Delta^{(i)}_V) - (\mu^{\Dcal_0}_{\textbf{M}} + \Delta^{(i)}_{\textbf{M}})^\top \Sigma^{-1} \sigma_{\textbf{M}, V} \\
        &= (\beta^{\Dcal_0}_{V | \textbf{M}})_0 + \Delta^{(i)}_V - (\Delta^{(i)}_{\textbf{M}})^\top \Sigma^{-1} \sigma_{\textbf{M}, V}.
\end{align*}
For the faithfulness condition to hold, we must have $(\beta^{\Dcal_0}_{V | \textbf{M}})_0 \neq (\beta^{\Dcal_i}_{V | \textbf{M}})_0$ and therefore we have
\begin{equation}\label{eqn:faithfulness}
    \Delta^{(i)}_V \neq (\Delta^{(i)}_{\textbf{M}})^\top \Sigma^{-1} \sigma_{\textbf{M}, V}.
\end{equation}
\end{proof}

At a high level, the faithfulness assumption ensures that the conditional distribution does change, i.e., the interventional values, which are a function of the costs, do not precisely satisfy the linear relationship above. Using the form of the best response, we have equality in Equation~\ref{eqn:faithfulness} if some linear combination of $\{1 / c_j\}_{j=1}^n$, whose weights are a function of the SCM parameters and variances, is zero. This suggests that equality happens with
for a set of parameter values of measure zero, 
assuming $c_j$'s are drawn independently from some product distribution.

\begin{algorithm}[t]
\caption{Subroutine for finding PC-set of all nodes~\citep{aliferis2010local}}
\label{alg:pc_set_alg}
\begin{algorithmic}[1]
\State $S_{all} = \{X\}_{i=1}^n \cup \{Y\}$
\For{each node $V$}
    \State Initialize $MB = \emptyset$
    \For{each node $V' \in S_{all} \setminus \{V\}$}
        \If{\textbf{not} $V' \indep V | MB$}
            \State $MB.add(V')$
        \EndIf
    \EndFor
    \For{$V' \in MB$}
        \If{$V' \indep V | MB$}
            \State $MB.remove(V')$
        \EndIf
    \EndFor
    \For{$V' \in MB$}
        \If{\textbf{for} all subsets $s$ of $MB$, \textbf{not} $V \indep V' | s $}
            \State Set $V$ is adjacent to $V'$
        \EndIf
    \EndFor
\EndFor
\end{algorithmic}
\end{algorithm}

\newpage

\section{Deferred Material on Algorithm~\ref{alg:linear_alg}}\label{sec:lin_cost_proofs}

\subsection{Linear Cost Best Response}

Under this setting, an individual is optimizing:

\begin{equation}
\label{eqn:alg2_br}
\begin{aligned}
\max_{a} \quad & w^T(x + Ba)\\
\textrm{s.t.} \quad & \sum_{i=1}^n c_i |a_i| \leq b.   \\
\end{aligned}
\end{equation}

Then the optimal intervention $a^*(w)$ is as follows: with $i^* = \argmax_{j \in [n]} \frac{|(B^Tw)_j|}{c_j}$:

\begin{equation}\label{eqn:all_linear_opt}
a^*(w) = \sign((B^Tw)_{i^*}) \left[\frac{b}{c_{i^*}} e_{i^*}\right].
\end{equation}

From this, we observe that at most $2n$ types of interventions may be induced: $\pm \frac{b}{c_i} e_i$. Moreover, each one intervention can be induced. For example, we note that $a^*(w) = \frac{b}{c_{i}} e_{i} $ for $w = (B^T)^{-1} e_i$.

Now to address the tie-breaker in the case when some features can be immutable, let $S_M \subseteq [n]$ be the subset of features which are mutable. That is, $c_i \neq \infty \Leftrightarrow i \in S_M$. We will assume that if $w$ is such that $(B^Tw)_j = 0$ for all $j \in S_M$, then the optimal $a^*(w)$ we observe will be some intervention $i \in S_M$. We will make no assumption on how this tie-breaking is done and which index $i \in S_M$ is chosen, just that the tie-breaking is done the same way across individuals.


\subsection{Proof of Theorem~\ref{thm:linear_alg_thm}}

\begin{theorem}
Algorithm~\ref{alg:linear_alg} computes the Pareto-Frontier using at most $2n$ deployments.
\end{theorem}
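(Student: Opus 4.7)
The plan is to prove three facts in sequence: (a) the deployment count is at most $2n$, (b) when the main loop exits, the observed set $S$ equals the full set $S^*$ of inducible induced distributions, and (c) the final enumeration correctly returns the Pareto frontier given (b). For (a) the bound is immediate: two initialization deployments plus at most $n - 1$ loop iterations with two deployments each gives $2n$.

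For (b), the main technical step, I will first use the closed-form best response in~\eqref{eqn:all_linear_opt} to observe that $|S^*| = 2k$, where $k$ is the number of mutable features; the inducible interventions are exactly $\pm(b/c_j)e_j$ for mutable $j$. The crucial argument is to analyze what a duplicate observation (line~\ref{alg_line:non_observe_linear}) reveals. If deploying $w_i$ yields a distribution already in $S$, then the induced best response is some previously-seen $\pm(b/c_{i^*}) e_{i^*}$, and the corresponding shift $B a^{i^*}$ lies in $\mathrm{span}(W)$; since $w_i$ was chosen in the nullspace of $W^\top$, the score $w_i^\top B a^{i^*}$ must vanish. Because the linear-cost best response strictly prefers any coordinate with positive magnitude-to-cost ratio, and every inducible intervention has its $\pm$ partner also inducible, a zero-score optimum forces $(B^\top w_i)_j = 0$ for every mutable $j$. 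I then deduce that every duplicate-branch $w_i$ lies in the $(n-k)$-dimensional subspace $\{B e_j : j \in S_M\}^\perp$, and since successive duplicate-branch vectors are linearly independent by the nullspace construction, at most $n - k$ duplicate iterations can occur. Given the $n - 1$ loop iteration budget, at least $k - 1$ iterations must therefore produce new interventions; combined with the initial pair $(\Dcal_1, \Dcal_{-1})$, this brings $|S|$ to $2k = |S^*|$ and triggers the explicit break.

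For (c), once $S = S^*$, I observe that the improvement $\Ecal_{\Dcal_i}[Y]$ depends only on the underlying intervention, so each $\Dcal_i \in S$ fixes a point on the improvement axis of the Pareto plane. The QP in~\eqref{eqn:min_mse_given_intervention} captures exactly the constraint that $a_i$ is the best response to $w$, since $\Ecal_{\Dcal_i}[X] - \Ecal_{\Dcal_0}[X] \propto B a_i$, while minimizing the true MSE on $\Dcal_i$. Hence $R_i$ is the smallest risk attainable at improvement $\Ecal_{\Dcal_i}[Y]$, and since any scoring mechanism $w'$ induces some intervention in $S^*$ and is weakly dominated by the QP minimizer at that intervention, retaining only the non-dominated pairs in $\Pi$ extracts the full Pareto front.

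The main obstacle is the structural claim in (b) that a duplicate observation forces $(B^\top w_i)_j = 0$ on every mutable coordinate. The care required lies in handling tie-breaking when the best response is non-unique; the paired $\pm a$ structure of inducible interventions is essential for ruling out the existence of an unseen $a^l$ with $w_i^\top B a^l > 0$, which would otherwise contradict the duplicate observation. A secondary care is to verify that the type (a) shifts in $W$ remain linearly independent (they are scalar multiples of distinct columns of the invertible $B$), so that the dimension accounting used in the counting argument is tight.
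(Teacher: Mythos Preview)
Your proposal is correct and follows essentially the same three-part decomposition as the paper: estimate $\Ecal_{\Dcal_0}[X]$ from the initial $\pm e_1$ pair, show that at most $n-k$ of the $n-1$ loop iterations can be duplicates (so all $2k$ inducible distributions are observed), and then argue that the QP~\eqref{eqn:min_mse_given_intervention} exactly characterizes the minimal risk at each fixed intervention. Your treatment of the key step in (b) is a mild rephrasing of the paper's --- you deduce $(B^\top w_i)_j = 0$ for all mutable $j$ from the observation that the optimal score achieved by the duplicate best response must be zero, whereas the paper first notes $(B^\top w_i)_{i_j} = 0$ on seen coordinates by construction and then handles unseen coordinates by contrapositive --- but the substance is identical, and the subsequent dimension-counting argument ($\dim W^0 = n-k$, duplicate-branch $w_i$'s mutually orthogonal) is the same.
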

\begin{proof}

We will prove algorithm correctness in several parts:

\paragraph{Estimation of $\Ecal_{\Dcal_0}[X]$} Through the distribution induced by $w$, we may observe $\Ecal_{\Dcal_0}[X] + Ba^*(w)$. The first step of the problem is to estimate $\Ecal_{\Dcal_0}[X]$ such that we may observe $Ba^*(w)$ directly. 

To do this, we deploy $w = e_1$, $w= -e_1$. It remains to argue that $a^*(w) = -a^*(-w)$. This follows because $i^* = \argmax_{j \in [n]} \frac{|(B^Tw)_j|}{c_j} \Leftrightarrow i^* = \argmax_{j \in [n]} \frac{|(B^T(-w))_j|}{c_j}$. From this, we can conclude $a^*(w) = \sign((B^Tw)_{i^*}) \left[\frac{b}{c_{i^*}} e_{i^*}\right] = -\sign((B^T(-w))_{i^*}) \left[\frac{b}{c_{i^*}} e_{i^*}\right] = -a^*(-w)$, using the closed form optimal solution in Equation~\ref{eqn:all_linear_opt}.

\paragraph{Elicitation of all possible distributions} WLOG $S_M = \{1, ..., k\}$, where $k \leq n$ is the number of mutable features. Let $W^0$ denote the nullspace of $[B e_1; ...; B e_k]$, where $e_i$ corresponds to the standard basis vector wrt node $i$.

We will show that, after $n - 1$ iterations of for-loop~\ref{alg_line:all_linear_loop}, we will not have observed a new distribution (corresponding to a new underlying intervention) $n - k$ times (reaching Condition~\ref{alg_line:non_observe_linear}). From this, we must have observed $ n - 1 - (n - k) = k - 1$ new underlying interventions, which must correspond to the rest of the $k - 1$ interventions that are possible. Thus, when the algorithm terminates, we would have observed all $2k$ distributions that are possible, corresponding to the $k$ possible interventions, with both signs possible for each intervention.

Consider iteration $i$ and suppose we have observed distributions corresponding to interventions on nodes $\{i_1, ..., i_{k'}\}$ for $k' < k$. The algorithm uses SVD to find a vector $w_i \neq 0$ in the null-space of $W$, which means it is also in the nullspace of $[B a^*(w'_1); ...; B a^*(w'_{k'})]$ where $w'_j$ denotes the model that induced intervention $i_j$. Note that since $\text{rank}(W) \leq i < n$, the nullspace of $W$ is non-empty and we can always find such a $w_i$.

Next, notice that since $w_i$ is in the nullspace of $[B a^*(w'_1); ...; B a^*(w'_{k'})]$, it must also be in the null-space of $[B e_{i_1}; ..., Be_{i_{k'}}]$. With this, $w_i$ must be such that $w_i^T B e_{i_j} = 0 \Leftrightarrow (B^Tw_i)_{i_j} = 0$ for all $j \in [k']$. Moreover, we know that since $B^T$ is full-rank, $B^Tw_i \neq 0$. And so, if $w_i^T B e_j \neq 0$ for some $j \in [k] \setminus \{i_1, ..., i_{k'}\}$, then we will observe a new distribution corresponding to some intervention in $[k] \setminus \{i_1, ..., i_{k'}\}$.

If it is the case that we do not observe a new distribution, we must have that $w_i^T B e_j = 0$ for all $j \in [k] \setminus \{i_1, ..., i_{k'}\}$ as well. Therefore, $w_i \in W^0$.

Suppose by contradiction, we reach Condition~\ref{alg_line:non_observe_linear} more than $n - k$ times. This means that there exists at least $n - k + 1$ vectors $w_{j_1}, .., w_{j_{n - k +1}}$ in $W^0$. By construction, each vector is orthogonal to the rest, which means $w_{j_1}, .., w_{j_{n - k +1}}$ are linearly independent. This implies that $\dim(W^0) \geq n - k + 1$. 

This however is a contradiction, because we have:
$$ \dim(W^0) + \dim(\{Be_1, ..., Be_k\}) = \dim(W^0) + k = n,$$
since $B$ is full rank and its columns are linearly independent.

\paragraph{Optimization of the Tradeoff} Finally, we note that since there are $2k$ distributions that may be induced, there are $2k$ possible (improvement, risk) pairs that can form the Pareto frontier.

Given a distribution $\Dcal_i$, we know that the improvement is fixed. Thus, it remains to evaluate the best attainable MSE $R_i$ under this distribution. To do this, we solve:

\begin{equation}
\begin{aligned}
\min_{w} \quad & R_{\Dcal_i}(w)\\
\textrm{s.t.} \quad & w^T (\Ecal_{D_i}[X] - \Ecal_{\Dcal_0}[X]) \geq w^T (\Ecal_{D_j}[X] - \Ecal_{\Dcal_0}[X]), \forall j \neq i.  \\
\end{aligned}
\end{equation}

Let the optimal intervention underlying $\Dcal_i$ be $a_{i^*}$. Observe that $w$ induces intervention $a_{i^*}$ iff $w^T (\Ecal_{D_i}[X] - \Ecal_{\Dcal_0}[X]) \geq w^T (\Ecal_{D_j}[X] - \Ecal_{\Dcal_0}[X]), \forall j \neq i$ since:

$$w^T (\Ecal_{D_i}[X] - \Ecal_{\Dcal_0}[X]) \geq w^T (\Ecal_{D_j}[X] - \Ecal_{\Dcal_0}[X]) \Leftrightarrow w^T(B a_{i^*}) \geq w^T(B a_j).$$

Therefore, solving this optimization program finds the model with the lowest MSE on $\Dcal_i$, out of all models that lead to $a_{i^*}$ being the best response that induces interventional distribution $\Dcal_i$.

Computing this for each possible $\Dcal_i$ generates the $2k$ possible (improvement, risk) pairs that can form the Pareto frontier. Hence, to obtain the Pareto frontier, it only remains to retain all undominated (improvement, risk) pairs out of the $2k$ pairs (which is done in the last step of the algorithm) as only these pairs will form the Pareto front.

\end{proof}

\newpage

\section{Deferred Material on Algorithm~\ref{alg:general_alg}}\label{sec:gen_cost_proofs}

\subsection{Algorithm Assumptions}

\textbf{Cost Function Property:} To recap, for cost functions under Cost Function Class~\ref{cost_class:general}, we no longer require that 
$\frac{\partial c(a; x)}{\partial a_i} \Bigr|_{a_i=0} = 0$ as in Cost Function Class~\ref{cost_class:quadratic}.

Moreover, we allow $S_i$ to be any subset of $\{x_j\}_{j=1}^n$. This class of cost functions includes the homogeneous quadratic cost function, $c(a) = \frac{1}{2} \|a\|^2$~\citep{shavit2020causal}
and the homogeneous linear cost function, $c(a) = \sum_{i=1}^n c_i |a_i|$~\citep{bechavod2020causal, kleinberg2020classifiers}
considered in prior works.

\textbf{Applicable SCM:} The discovery algorithm that we develop 
applies to a large family of causal graphs: 
Additive Noise Models~\citep{peters2017elements}.

\subsection{Algorithm Proof}

The algorithm relies on the following observation:

\begin{proposition}
Deploying $X_i - g_i(X_{\pa(i)})$ induces an intervention only on node $X_i$.
\end{proposition}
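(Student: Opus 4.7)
The plan is to directly analyze the best-response program when the mechanism $f(x') = x'_i - g_i(x'_{\pa(i)})$ is deployed. The crucial observation is that under the ANM assumption, the structural equation gives $x'_i = g_i(x'_{\pa(i)}) + u_i + a_i$, so substituting into $f$ causes the $g_i(x'_{\pa(i)})$ term to cancel exactly. The objective therefore reduces to $u_i + a_i$. Since $u_i$ is a realized, fixed value for the individual, the optimization is equivalent to maximizing $a_i$ alone, and depends on no other intervention coordinate $a_j$ for $j\neq i$. Note that this cancellation does not depend on which downstream features $x'_j$ might shift through cascading effects, because $f$ is evaluated on the self-consistent manipulated features.

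Given this simplified objective, the next step is to show the optimal intervention satisfies $a^*_j = 0$ for all $j\neq i$. I would argue by contradiction: suppose $a^*_j\neq 0$ for some $j\neq i$. Because the cost decomposes as $c(a;x)=\sum_k c_k(a_k;x_{S_k})$ and each $c_j$ is strictly increasing in $|a_j|$ with $c_j(0;x_{S_j})=0$, setting $a_j=0$ strictly reduces the total cost while leaving the objective $u_i + a_i$ unchanged. The freed budget can then be spent on strictly increasing $a_i$ (which is always feasible since $c_i$ is continuous, finite, and strictly increasing in $|a_i|$), strictly improving the objective and contradicting optimality of $a^*$. A parallel argument gives $a^*_i \neq 0$: starting from $a_i = 0$, a small positive $a_i$ is always affordable and strictly improves $u_i + a_i$.

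Finally, once I have $a^*_j = 0$ for $j\neq i$ and $a^*_i\neq 0$, the recursive definition of the manipulated features $\tilde{x}_j = g_j(\tilde{x}_{\pa(j)},u_j) + a^*_j$ implies that, for any node $j \neq i$, the value $\tilde{x}_j$ equals the value it would take under no intervention of its own, with changes only propagating through its parents. In particular, only $\tilde{x}_i$ absorbs an additive shift $a^*_i$ relative to its ANM equation, which is precisely what is meant by ``an intervention only on node $X_i$.''

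I do not anticipate a substantial obstacle here; the crux is the algebraic cancellation enabled by the ANM structure, and the rest is a standard budget-reallocation argument. The only subtlety worth flagging is that separability of the cost is essential: with cross-terms coupling $a_j$ and $a_i$, a nonzero $a^*_j$ might lower the cost of spending on $a_i$, which could break the conclusion. Under Cost Function Class~\ref{cost_class:general}, this coupling is excluded by design.
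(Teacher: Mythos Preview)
Your proposal is correct and matches the paper's own proof essentially line for line: both exploit the ANM cancellation to reduce the objective to $u_i + a_i$, then use the strict monotonicity and separability of the cost to argue $a^*_j = 0$ for $j \neq i$ via a budget-reallocation contradiction. Your write-up is in fact slightly more thorough (you explicitly verify $a^*_i \neq 0$ and flag the role of separability), but there is no substantive difference in approach.
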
~\label{prop:additive_mech}
\begin{proof}
With this choice of scoring mechanism, the optimization objective for an individual with features $x$ is:
\begin{equation}
\begin{aligned}
\max_{a} \quad & x'_i - g_i(x'_{\pa(i)})\\
\textrm{s.t.} \quad & x'_j = g_j(x'_{\pa(j)}, u_j) + a_j \quad \forall j \in [n]   \\
\quad & c(a_1, ..., a_n; x) \leq b.
\end{aligned}
\end{equation}

Since $g_i$ is additive, we may plug in $g_i(x'_{\pa(i)}, u_i) = g_i(x'_{\pa(i)}) + u_i$ and the objective becomes $x'_i - g_i(x'_{\pa(i)}) = g_i(x'_{\pa(i)}, u_i) + a_i -   g_i(x'_{\pa(i)}) = u_i + a_i$. And so, each individual with features $x$ is optimizing:

\begin{equation}
\label{eqn:alg3_br}
\begin{aligned}
\max_{a} \quad & u_i + a_i\\
\textrm{s.t.} \quad & x'_j = g_j(x'_{\pa(j)}, u_j) + a_j \quad \forall j \in [n]   \\
\quad & \sum_{j=1}^n c_j(a_j; x) \leq b.
\end{aligned}
\end{equation}

Note that $u_i$ is a fixed constant. Moreover, since each cost function $c_j$ is strictly increasing in the magnitude of $a_j$, we must have that $a^*_j = 0$ for $j \neq i$ (otherwise one can increase $a_i$ instead to increase the objective). And so, we have that only $a^*_i \neq 0$. 
\end{proof}
\textbf{Remark:} In the linear case, this choice is in fact the unique policy that produces an incentive to \emph{only} invest in $X_i$.

\subsection{Proof of Correctness of Algorithm~\ref{alg:general_alg}}

We first describe the mild faithfulness assumption we will need.


\textbf{Assumption (Mean Interventional Faithfulness):} Let $V \in G$ be any node in $G$. Let $\Ical^i$ be the set of all non-$\verb|null|$, intervention nodes in the augmented graph $\tilde{G}$ corresponding to $\Dcal_i$, then:
$$\exists I \in \Ical^i \text{ s.t } I \notindep  \tilde{V} \Leftrightarrow  \Ecal_{\Dcal_i}[V] \neq \Ecal_{\Dcal_0}[V].$$

For the proof below, we will actually only require a particular instantiation of the faithfulness assumption above (as used also e.g by~\citet{zhang2021matching}). This particular case is that if node $X_i$ is intervened upon, and $V$ is its child highest in the topological order, then the mean of $V$ in the interventional distribution shifts. Put another way, this assumes that the interventional values on $X_i$ and $V$ as well as the SCM parameter relating the two nodes are not such that the interventions cancel out exactly, and the mean of $V$ does not change.

\begin{theorem}
Algorithm~\ref{alg:general_alg} recovers the full graph structure using at most $n(n-1) / 2$ deployments.
\end{theorem}

\begin{proof}

We will prove that the parents of each node are correctly identified by the algorithm, which implies that the full graph is correctly identified. To do this, we will show that Algorithm~\ref{alg:general_alg} always maintains the invariant property (1) that, each iteration, the node that is added to $S$ from the subgraph $SG$ is always a leaf node.

(1) has the implication that (2) no node is added before all of its descendants. Indeed, a node is only added when it is a leaf, and if a node does have at least one descendant in the subgraph, it is not a leaf and cannot be added. 

Thus, with (1), the algorithm will be such that the following holds: (3) that every node in $S$ has its parents correctly and completely identified. When a new node is added to $S$, we identify all nodes in $SG$ adjacent to the new node as its parents. Since the node is a leaf, every such node in $SG$ can only be its parents, and by (2) must be all of its parents. And so, this ensures that this new node's parents also satisfy (3).

We see that (1) is satisfied for $S$ at initialization. To prove (1) always holds, it suffices to show that leaf nodes in any subgraph $SG$ will be such that Condition~\ref{alg_line:general_test} is always false and no non-leaf node in any subgraph $SG$ will be such that Condition~\ref{alg_line:general_test} is always false.

Let $\Xcal$ be the set of $\{X_i\}_{i=1}^n$ nodes. For a particular subgraph $SG$, suppose $SG$ and $S = \Xcal \setminus SG$ satisfies (2).

\textbf{Non-Leaf Nodes in $\Xcal$ do not pass test:}
First, note that there has to exist at least one node that is intervened upon. This is because $f'$ is monotonically increasing in $a_i$, so the best response will include non-zero interventions on at least one node.

Next, since the policy is a function of $X_{P_i}$ and $X_i$, the intervention will take place on only node(s) that are ancestors of nodes of $\{i\} \cup P_i$. This is again because for any $j \not \in \anc(i) \cup \anc(P_i)$, changing $a_j$ will not change $\tilde{X}_{P_i}$ and $\tilde{X}_i$ (and thus the objective), but strictly increases costs. By (2), since $\{i\} \cup P_i \in SG$, $\anc(i) \cup \anc(P_i) \in SG$. That is, every node that will be intervened upon when $f_i = X_i - \hat{g}_i(X_{P_i})$ is deployed will be in the subgraph $SG$.

Out of all nodes which are intervened upon under $f = X_i - \hat{g}_i(X_{P_i})$, let $k$ be the index of a node such that none of its ancestors is intervened upon (i.e an intervened node that is highest in topological order). If $k \neq i$, we have $\Ecal_{\Dcal_i}[X_k] \neq \Ecal_{\Dcal_0}[X_k]$ since $X_k$ is dependent on $I_k^i$.

Else, we have that $k = i$. We know that since $X_i$ is not a leaf, it must have at least one child. Let $V$ in $SG$ be the child of $i$ with the highest topological order. We have that $V$ is dependent on $I_i^i$ due to chain $I_i^i \rightarrow X_i \rightarrow V$. And so, by our faithfulness assumption, its expectation under $\Dcal_i$ will change due to the intervention on $X_i$. Note that it may be that under $f$, $V$ may also be intervened upon; our faithfulness assumption is that the SCM parameters are not such that the two interventions cancel out exactly.

Either way, we conclude that Condition~\ref{alg_line:general_test} will hold for at least one node in the subgraph.

\textbf{Leaf Nodes in $\Xcal$ pass test:} Suppose first that subgraph $SG$ has a leaf node $X_i$. Then, all its parents must be still in the subgraph $SG$ by property (2). Since it is a leaf in $SG$, none of its children is in $SG$. And so, all the nodes $P_i$ adjacent to $X_i$ in the $GS$ must be its parents and only its parents. Thus, in additive SCMs, the model $\hat{g}_i = (X_i | X_{P_i})_{\Dcal_0}$ identifies $g_i$, the true SCM parameter, up to a fixed constant which does not affect the best response. Thus, from Proposition~\ref{prop:only_intervention}, we have that in $\Dcal_i$, only $X_i$ is intervened upon. 

With this, we can conclude that no other node in $SG$ has its distribution change since $X_i$ has no descendants; $X_i$'s intervention only changes the distribution of $X_i$. Hence if $X_i$ is a leaf, Condition~\ref{alg_line:general_test} will always be false. (1) will be satisfied as we have just shown that a node of the subgraph will make Condition~\ref{alg_line:general_test} always false iff it is a leaf.

\textbf{Lone $Y$ leaf:} Finally, the remaining case is when $Y$ is the only leaf of the current subgraph. We have just shown that if there is a leaf in $SG$ and in $\Xcal$, it will meet the criteria. We have also shown earlier that no non-leaf node in $\Xcal$ can meet the criteria. So if it is the case that no nodes in $\Xcal$ meets the criteria, then by the process of elimination, $Y$ must be the only leaf in the subgraph.

\textbf{Termination:} The algorithm terminates when there is only one node left in the subgraph. By (2), it must be a root node in the full graph. This means that we have also managed to identify its parents, which is the empty set.

\textbf{Complexity:} The algorithm adds one node to $S$ per iteration and there are at most $n$ iterations. During each iteration, we run at most $|SG|$ regressions and $|SG|$ deployments. And so, at most $n(n-1) / 2 $ regressions and deployments are needed to discover the graph.

Note that a quadratic number of deployments is needed since, unlike Algorithm~\ref{alg:quadratic_alg}, the costs may be such that we are only able to guarantee an intervention on a node $X_i$ when we have correctly guessed its set of parents.

\end{proof}

\textbf{Remark:} Intuitively, leaves of subgraphs are useful since intervention and change in distribution is isolated to the leaf nodes. By contrast, for root nodes, interventions will change nodes of the entire subgraph.

\newpage

\section{Relaxation of Assumptions on inputs to the Algorithms}\label{sec:assumption_justification}

\subsection{Graph Skeleton}

The assumption of knowledge of the graph skeleton is mild, as it may be readily computed using a variety of algorithms. These include the first part of the PC algorithm or local discovery Algorithm~\ref{alg:pc_set_alg}, which identifies a node's Parent-Child set (PC-set) and has its complexity scale with the node with the highest degree in the undirected graph.

Also, we note that one may modify Algorithm~\ref{alg:general_alg} slightly to bypass this assumption. Instead, in order to perform discovery, one would require access to the following oracle.

\textbf{Assumption (Parent-Extraction Learning Oracle):} Given node $X_i$ and $\Dcal_0$, as well as a set of nodes $S$ such that (1) $\pa(S) \subseteq S$ and (2) none of $X_i$'s descendants is in $S$, returns $X_{\pa(i)}$ and $g_i = (X_i | X_{\pa(i)})_{\Dcal_0}$.

In the case of Linear SCMs, this oracle may be implemented by LASSO and by looking at which features have non-zero coefficients in the model learned using LASSO. However, for nonlinear SCMs, it is unclear if there is a learning algorithm that would be guaranteed to prune out non-parents, especially if there are multiple local optima.

For Algorithm~\ref{alg:general_alg}, it suffices to just alter Line~\ref{alg_line:general_regression_args} to regress $X_i$ against $X_{SG}$ using following oracle:

\begin{itemize}
    \item If it is a leaf, we will obtain $X_{\pa(i)}$ and $g_i = (X_i | X_{\pa(i)})_{\Dcal_0}$. And as we have shown previously, deploying $f_i$ using $\hat{g}_i = (X_i | X_{\pa(i)})_{\Dcal_0}$ will lead to Condition~\ref{alg_line:general_test} being always false iff it is a leaf of $SG$. 

    \item For the orientation step (Line~\ref{alg_line:general_orientation}), we may orient its edges with its parents that are returned by the oracle. Finally, in the case of $Y$ being the leaf, we will again use the oracle to regress $Y$ against $SG$. Since it is a leaf, we will obtain $\pa(Y)$ from the oracle, which we may orient accordingly.

\end{itemize}

\subsection{Natural Distribution}

In this subsection, we discuss the assumption of access to the natural distribution, as assumed in e.g~\citep{shavit2020causal}. As in~\citep{shavit2020causal}, this distribution may be induced by the null mechanism. Alternatively, for cases where the best response is an odd function (e.g when the objective is an odd function and the cost is an even function), the natural distribution may be obtained by one further deployment, since the best response will be an odd function. For instance, in the linear SCM, quadratic cost case, $a^*(w) = -a^*(-w)$. We may then deploy $f = -x_1$, from which obtain natural distribution $\Dcal_0 = (\Dcal_1 + \Dcal_{-1}) / 2$.

\newpage

\section{Miscellaneous}

\textbf{Other SCM classes under Cost Class~\ref{cost_class:general}:} We note that Algorithm~\ref{alg:general_alg} may be also be applied to the Multiplicative Noise Models. That is, $X_i = g_i(X_{\pa(i)})U$.

For the algorithm to succeed, we will need to additionally assume that $g_i$ is monotonically increasing, which we may learn using monotonic functions $\hat{g}$ (e.g~\citep{kleinberg2020classifiers}), and that $a \geq 0$. Then Algorithm~\ref{alg:general_alg} will again work with the choice of model $f_i$ being: $f_i = \frac{X_i}{\hat{g}_i(X_{P_i})}$. 

It remains an open question whether one can develop a general algorithm that works for even more general SCM classes, such as the Post-Nonlinear Models~\citep{zhang2012identifiability}.

\textbf{Causal Side-Information:} An astute observer may notice that since we have knowledge of the cost function $c(a_i; \tau_i)$, $\tau_i$ may inform which nodes are upstream of $i$. We wish to note our algorithm is designed to handle settings in which $\tau_i = \emptyset$ (with no causal side-information revealed) and does not leverage this causal knowledge. Indeed, to reiterate, our algorithms do not assume access to the cost function, but rather assume knowledge that they belong to a certain cost class. Furthermore, there exists a large class of cost functions where this causal side-information is not enough to orient the full graph.

\end{document}